\newcommand{\R}{\mathbb{R}}
\definecolor{cyan}{rgb}{0,0.5,.2}
\definecolor{dcyan}{rgb}{0,0.3,.3}
 \definecolor{pink}{HTML}{ff00d4}
 \definecolor{dpink}{HTML}{99007f}
\definecolor{darkgray}{HTML}{6e6e6e}
\algnewcommand\algorithmicto{\textbf{to}}
\algnewcommand\RETURN{\State \textbf{return} }
\journal{Journal of Econometrics (status: accepted)}
\newcommand{\E}{{\mathbb E}}
\renewcommand{\R}{{\mathbb R}}
\newcommand{\OO}{{\mathcal{O}}}
\newcommand{\bsb}{\boldsymbol b}
\newcommand{\bsr}{\boldsymbol r}
\newcommand{\bsw}{\boldsymbol w}
\newcommand{\bsx}{\boldsymbol x}
\newcommand{\bsB}{\boldsymbol B}
\newcommand{\bsD}{\boldsymbol D}
\newcommand{\bsE}{\boldsymbol E}
\newcommand{\bsI}{\boldsymbol I}
\newcommand{\bsR}{\boldsymbol R}
\newcommand{\bsV}{\boldsymbol V}
\newcommand{\bsX}{\boldsymbol X}
\newcommand{\bsZ}{\boldsymbol Z}
\newcommand{\bsone}{\boldsymbol 1}
\newcommand{\bsnull}{\boldsymbol 0}
\newcommand{\bsPP}{\boldsymbol{\mathcal{P}}}
\newcommand{\bsbeta}{\boldsymbol \beta}
\newcommand{\bseta}{\boldsymbol \eta}
\newcommand{\bsvarphi}{\boldsymbol \varphi}
\newcommand{\bspsi}{\boldsymbol \psi}
\DeclareMathOperator*{\argmin}{arg\,min}
\DeclareMathOperator*{\spt}{spt}
\DeclareMathOperator*{\QL}{QL}
\DeclareMathOperator*{\bsQL}{\textbf{QL}}
\DeclareMathOperator*{\CRPS}{CRPS}
\DeclareMathOperator*{\BIC}{BIC}
\DeclareMathOperator*{\lin}{lin}
\DeclareMathOperator*{\conv}{conv}
\DeclareMathOperator*{\sign}{sign}
\DeclareMathOperator*{\softmax}{SoftMax}
\DeclareMathOperator*{\bssoftmax}{\textbf{SoftMax}}
\newcommand{\ov}\overline
\newcommand{\what}{\widehat}
\newcommand{\wtilde}{\widetilde}
\newcommand{\rig}\right
\newcommand{\lef}\left
\newcommand{\nf}\normalfont
\newtheorem{theorem}{Theorem}
\newtheorem{proposition}{Proposition}
\begin{document}

\begin{frontmatter}

    \title{CRPS Learning}

    \author{Jonathan Berrisch\fnref{jmail}}
    \author{Florian Ziel\fnref{fmail}}
    \address{University of Duisburg-Essen, Germany}
    \fntext[jmail]{jonathan.berrisch@uni-due.de}
    \fntext[fmail]{florian.ziel@uni-due.de}
    \begin{abstract}
        Combination and aggregation techniques can significantly improve forecast accuracy. This also holds for probabilistic forecasting methods where predictive distributions are combined.
There are several time-varying and adaptive weighting schemes
such as Bayesian model averaging (BMA).
However, the quality of different forecasts may vary not only over time but also within the distribution.
For example, some distribution forecasts may be more accurate in the center of the distributions, while others are better at predicting the tails.
Therefore, we introduce a new weighting method that considers the differences in performance over time and within the distribution.
We discuss pointwise combination algorithms based on aggregation across quantiles that optimize with respect to the continuous ranked probability score (CRPS).
After analyzing the theoretical properties of pointwise CRPS learning, we discuss B- and P-Spline-based estimation techniques for batch and online learning, based on quantile regression and prediction with expert advice.
We prove that the proposed fully adaptive Bernstein online aggregation (BOA) method for pointwise CRPS online learning has optimal convergence properties.
They are confirmed in simulations
and a probabilistic forecasting study for European emission allowance (EUA) prices.

    \end{abstract}
    \begin{keyword}
        Combination \sep Aggregation \sep Online \sep Probabilistic \sep Forecasting \sep Quantile \sep Time Series \sep Distribution \sep Density \sep Prediction \sep Splines
        \JEL C15 \sep C18 \sep C21 \sep C22 \sep  C53 \sep C58 \sep G17 \sep Q47
    \end{keyword}
\end{frontmatter}

\section{Introduction}

Combination and aggregation methods are gaining popularity in forecasting, especially in meteorology, energy, economics, finance, and machine learning communities \citep{hsiao2014there,atiya2020does, petropoulos2020forecasting}.
As the predictive accuracy of different forecasting methods may vary over time, several methods learn about this behavior. So there are time-varying and adaptive weighting schemes.
Also, in probabilistic forecasting, where we aim to predict the full predictive distribution or density, many combination methods exist, such as the Bayesian model averaging (BMA) \citep{raftery2005using, fragoso2018bayesian}.
However, individual forecasters may perform differently in different parts of the distribution. For instance, one is more accurate in the center while others are more accurate in the distribution's tails. Thus, intuitively, it makes sense to exploit this potential of varying performance across the distribution, as we will do in this manuscript.

There are various ways to combine distribution function forecasts
to receive a target distribution forecast. The most popular linear approaches are combining across probabilities (vertical aggregation) and combining across quantiles (horizontal aggregation, Vincentization)~\citep{lichtendahl2013better, busetti2017quantile}. Both principles are valid and have their pros and cons, depending on the forecasters' objective. In econometric and statistical literature, forecast combination with respect to the Kullback-Leibler divergence is popular. This approach corresponds to minimizing the log-score of density and usually utilizes the combination across probabilities \citep{gneiting2013combining, aastveit2014nowcasting, kapetanios2015generalised, aastveit2019evolution}.

Other recent publications focus on forecast combination with respect to the continuous ranked probability score (CRPS), a strictly proper scoring rule for distributions, which does not require a density~\citep{gneiting2007strictly}. For instance, \citet{raftery2005using} combine density forecasts using BMA with refinement by minimizing the CRPS. \citet{jore2010combining, opschoor2017combining, thorey2017online, v2020online} and~\citet{zamo2021sequential} use an exponential weighting scheme based on combination across probabilities. A polynomial CRPS learning algorithm is applied in~\citet{thorey2018ensemble}.~\citet{korotin2019integral, korotin2020mixing} analyze properties of exponential weighting learning algorithms for forecasting distributions, including a CRPS based algorithm. Also,~\citet{zhang2020load} show an application for probabilistic electricity demand forecasting where they optimize combination weights with respect to the CRPS. \citet{lin2018multi} and~\citet{van2018probabilistic} apply density-based aggregation methods with weights optimized with respect to the CRPS, the latter for the Gaussian case.

Additionally, plenty of forecast combination applications evaluate the predictive performance of distribution forecasts in terms of CRPS but do not optimize with respect to it. Some authors like~\citet{bai2020does} and~\citet{zamo2021sequential} also compute combination weights based on inverse CRPS-weighting in probabilistic oil price and wind speed forecasting. However, all applications mentioned above consider time-varying weights but ignore the potentially varying performance within the distribution.

It turns out that aggregation across quantiles is a useful tool for considering varying performance within our forecasters' distribution when aiming for optimal CRPS. Therefore, instead of searching for optimal weights, we are looking for optimal weight functions. Furthermore, the weight functions shall be chosen pointwise so that the resulting distribution forecast has minimal CRPS. Thus combination weights assigned to a forecast can be larger in areas of the distribution where this forecast performs better.

This manuscript contributes to the probabilistic forecasting literature, as we
\begin{itemize}
    \item[i)] Introduce a probabilistic forecast combination framework for pointwise CRPS learning based on aggregation across quantiles.
    \item[ii)] Prove that the proposed pointwise CRPS learning outperforms standard CRPS learning methods.
    \item[iii)] Study combination techniques based on B-Splines and P-Splines to model the combination weight functions.
    \item[iv)] Analyze a fully adaptive CRPS based Bernstein online aggregation (BOA) algorithm that satisfies theoretical guarantees on the convergence rates.
    \item[v)] Conduct simulation studies and show an application for probabilistic forecasting of European emission allowance (EUA) prices.
\end{itemize}

In the proceeding Section~\ref{exp_adv}, we introduce the relevant basics on forecast combination, batch and online learning, and risk. We continue in Section~\ref{sec_crps_learning} by studying the basics of the CRPS, introducing CRPS learning and its optimality properties, and discussing batch estimation techniques that utilize quantile regression. In Section~\ref{theor}, we briefly review the online learning concept of prediction with expert advice. Then we introduce the CRPS based BOA algorithm and provide a theorem for the optimality of convergence rates. Section~\ref{sec_extentions} discusses extensions to the CRPS learning algorithm, e.g., by considering P-Splines, and includes implementation remarks.
In Section~\ref{sim}, we present simulation studies that evaluate the properties of the proposed algorithm and compare the performance with other existing algorithms. Finally, we illustrate the procedure by applying it to probabilistic forecasting of European emission allowances prices in Section~\ref{appl}. Section~\ref{concl} summarizes and concludes.

\section{Relevant Aspects on Forecast Combination}\label{exp_adv}

\subsection{Notation and Basics}

We first require some notation. Let $K$ be the number of forecasters (often referred to as experts) that we want to aggregate. Denote $\what{X}_{t,k}$ the forecast of expert $k$ for the prediction target $Y_t$ at time $t$. This forecast $\what{X}_{t,k}$ could be a forecast for the mean or median of $Y_t$. $\what{X}_{t,k}$ could also be the distribution function of $Y_t$ or its inverse. We assume that $\what{\bsX}_{t} = (\what{X}_{t,1},\ldots,\what{X}_{t,K})'$ is predicted sequentially given all information $\mathcal{F}_{t-1}$ available until $t-1$.

We want to use weights $w_{t,k}$ at time $t$ for expert $k$ to define the linear forecast combination $\wtilde{X}_t$ by
\begin{equation}
    \wtilde{X}_{t} =  \sum_{k=1}^K w_{t,k} \what{X}_{t,k}
    = \bsw_t' \what{\bsX}_{t}
    \label{eq_forecast_def}
\end{equation}
Many applications restrict $w_t$ to convex combinations, i.e. $\sum_{k=1}^K w_{t,k} = 1$ and $w_{t,k}\geq 0$. A common aggregation procedure is the naive combination method, also known as uniform aggregation or simple average where $w_{t,k}^{\text{naive}}= 1/K$. More sophisticated aggregation or combination rules assign the weights based on the prediction target, but also the past performance or the model fit.

In general, all combination rules for $\bsw_{t}$ can be distinguished in batch learners and online learners. Batch learning algorithms potentially evaluate all past information (e.g.  $\what{\bsX}_{t}, \what{\bsX}_{t-1},\ldots$, $Y_{t}, Y_{t-1},\ldots$ and $\bsw_{t-1}, \bsw_{t-2}, \ldots$) for updating $\bsw_{t}$. However, online learners, only evaluate the most recent information (e.g. $\what{\bsX}_{t}$, $ Y_{t-1}$, $\bsw_{t-1}$) to compute $\bsw_{t}$. Obviously, the latter algorithms tend to be substantially faster in practice as less information has to be evaluated. Sometimes, both approaches can be converted into each other~\citep{hoi2018online}. A standard batch learning example is ordinary least squares (OLS). In a rolling or expanding window context the corresponding online learning counterpart is recursive least squares (RLS)~\citep{lee1981recursive}. Another popular online learning procedure is the Kalman filter, which corresponds to a ridge regression problem in batch learning \citep{diderrich1985kalman}.

For the prediction target, there might be proper scoring rules available~\citep{gneiting2007strictly}. A scoring or loss function $\ell$ takes as first input a forecast $\what{X}_{t,k}$ and as a second input the prediction target $Y_t$.  The loss function is strictly proper, if it can identify the true prediction target. Popular loss functions used for point forecasting include the $\ell_2$-loss $\ell_2(x, y) = | x -y|^2$ which is strictly proper for mean predictions  and the $\ell_1$-loss $\ell_1(x, y) = | x -y|$ which is strictly proper for median predictions~\citep{gneiting2011making}. For distribution functions the CRPS is a strictly proper scoring rule\footnote{The underlying distribution has to be in $L_1$, so it requires a finite first moment.} which we consider in Section~\ref{sec_crps_learning}.

\subsection{Risk and its Optimality}

According to~\citet[p. 120]{wintenberger2017optimal}, the risk is a suitable measure for predictive accuracy. The risk $\wtilde{\mathcal{R}}_t$ of our forecast combination $\wtilde{X}_{t}$ is given by
\begin{align}
    \wtilde{\mathcal{R}}_t = \sum_{i=1}^t \mathbb{E}[\ell(\wtilde{X}_{i},Y_i)|\mathcal{F}_{i-1}]
    \label{eq_risk}
\end{align}
where $\mathcal{F}_{t}$ contains all information available at time $t$. For simplicity we will denote the risk of expert $\what{X}_{t,k}$ by $\what{\mathcal{R}}_{t,k}$  which can be computed with~\eqref{eq_risk} by choosing the $k$-th unit vector for $w_{i,k}$ in the definition~\eqref{eq_forecast_def}.

A suitable combination rule may satisfy two properties in iid settings.\footnote{In the iid setting we assume that $(Y_t, \bsZ_t)$ is iid where $\bsZ_t$ is some external input, such that $\what{X}_{t,k}$ depends on $(Y_{t-1},\bsZ_{t-1}), (Y_{t-2},\bsZ_{t-2}),\ldots$ with corresponding filtration $\mathcal{F}_{t-1}$} The first is the so called model selection property. This means that the risk of our forecast combination $\wtilde{X}_i$ is asymptotically not worse than risk of the best individual expert $\what{\mathcal{R}}_{t,\min} = \min_{k=1,\ldots,K} \what{\mathcal{R}}_{t,k}$.

Even stricter properties are called linear and convex aggregation properties. In the former, the performance of our forecast combination $\wtilde{X}_t$ is compared to the best linear combination
$\what{X}_{t,\lin} = \bsw_{t,\lin}'\what{\bsX}_t $
with aggregation weights $\bsw_{t,\lin} = \argmin_{\bsw \in {\R}^K} \sum_{i=1}^t \mathbb{E}[\ell( \bsw' \what{\bsX}_{i}, Y_i)|\mathcal{F}_{i}]$ and risk
$\what{\mathcal{R}}_{t,\lin} = \sum_{i=1}^t \mathbb{E}[\ell(\what{X}_{i,\lin}, Y_i)|\mathcal{F}_{i}]$. In the convex aggregation property
it is compared to the best convex combination
$\what{X}_{t,\conv} = \bsw_{t,\conv}'\what{\bsX}_t $
with $w_{t,\conv} = \argmin_{\bsw \in {\R}^K, \bsw\geq \bsnull, \bsw'\bsone=1} \sum_{i=1}^t \mathbb{E}[\ell( \bsw' \what{\bsX}_{i}, Y_i)|\mathcal{F}_{i}]$ and risk
$\what{\mathcal{R}}_{t,\conv} = \sum_{i=1}^t \mathbb{E}[\ell(\what{X}_{i,\conv}, Y_i)|\mathcal{F}_{i}]$.

The convex combination can never be worse than the best individual expert. Similarly, the linear combination will never be worse than the best convex one. It turns out that satisfying the convex and linear property comes at the cost of slower possible convergence. The optimal convergence rate in $t$ for the linear and convex properties is $\mathcal{O}(\sqrt{t})$ which is standard in asymptotic statistics while the selection property can yield the fast rate $\mathcal{O}(t)$.

Based on~\citet[p.121]{wintenberger2017optimal}, an algorithm has optimal rates if
\begin{align}
    \frac{1}{t}\left(\wtilde{\mathcal{R}}_t - \what{\mathcal{R}}_{t,\min} \right)  & =
    \mathcal{O}\left(\frac{\log(K)}{t}\right)\label{eq_optp_select}                    \\
    \frac{1}{t}\left(\wtilde{\mathcal{R}}_t - \what{\mathcal{R}}_{t,\conv} \right) & =
    \mathcal{O}\left(\sqrt{\frac{\log(K)}{t}}\right)
    \label{eq_optp_conv}                                                               \\
    \frac{1}{t}\left(\wtilde{\mathcal{R}}_t - \what{\mathcal{R}}_{t,\lin} \right)  & =
    \mathcal{O}\left(\sqrt{\frac{\log(K)}{t}}\right)
    \label{eq_optp_lin}
\end{align}
which characterises optimal dependence on time $t$ and the number of experts $K$. Learning algorithms may satisfy~\eqref{eq_optp_select},~\eqref{eq_optp_conv} and even~\eqref{eq_optp_lin} depending on the loss function $\ell$ and regularity conditions on $Y_t$ and $\wtilde{X}_{t,k}$. A key contribution of this manuscript is a fully adaptive online learning algorithm that almost satisfies~\eqref{eq_optp_select} and~\eqref{eq_optp_conv} in a CRPS learning setting.

\section{Fundamentals on CRPS Learning}\label{sec_crps_learning}

\subsection{Basics on the CRPS}

The CRPS is a scoring rule for evaluating probabilistic forecasts with respect to corresponding observations. The CRPS is defined as follows:
\begin{align}
    \CRPS(F, y) & = \int_{{\R}} {(F(x) - \mathbb{1}\{ x > y \})}^2 dx
    \label{eq_crps}
\end{align}
where $F$ is the distribution function and $y$ the observation. The CRPS is strictly proper and widely used in forecasting~\citep{gneiting2014probabilistic, jordan2019evaluating}. Now, we want to use an important characterization of the CRPS. To utilize it, we require $F^{-1}$ to be uniquely defined (e.g. $F$ has a pdf). It is based on the quantile loss (QL, also known as quantile score, pinball score, and asymmetric piecewise linear score), which is a strictly proper scoring rule for quantile forecasts~\citep{gneiting2011making}. The quantile loss is defined as
\begin{align}
    {\QL}_p(q, y) & = (\mathbb{1}\{y < q\} -p)(q - y)
\end{align}
for a probability $p\in (0,1)$, quantile forecast $q$ at probability $p$ for the target value $y$.
The CRPS can be written as scaled integral on ${\QL}_p$ over $p$~\citep[p. 414]{gneiting2011comparing}:
\begin{align}
    \CRPS(F, y) = 2 \int_0^{1}  {\QL}_p(F^{-1}(p), y) \, d p.
    \label{eq_crps_qs}
\end{align}
Representation~\eqref{eq_crps_qs} has a clear advantage. It allows us to move from distribution-based evaluation procedures to an integral of quantile forecasts, which is easier to handle~\citep{gneiting2011making, gneiting2011quantiles}.

We can approximate the CRPS by
\begin{align}
    \CRPS(F, y) \approx \frac{2}{M} \sum_{m=1}^M  {\QL}_{p_i}(F^{-1}(p_i), y)
    \label{eq_crps_approx}
\end{align}
for an equidistant dense grid $\bsPP = ( p_1,\ldots, p_M )$ with $p_m<p_{m+1}$ and $p_{m+1}-p_m = h_M$ for all $m$. Clearly, $M\to \infty$ induces $h_M \to0$, $p_1\to0$, $p_M\to1$ and the approximation converges to the CRPS.

One problem in practice is that explicit solutions for the CRPS are only available for limited parametric cases \citep{jordan2019evaluating}. Additionally, except for mentioned limited parametric cases, reporting the exact full predictive distributions is impossible in many applications. It always requires some degree of approximation. A typical option
for reporting $\what{F}$ is to report many quantiles $\what{F}^{-1}(p)$ on a dense grid of probabilities $p$ as in the CRPS approximation~\eqref{eq_crps_approx}.
A popular choice for such a grid $\bsPP$ contains all percentiles, i.e. $\bsPP=(0.01,0.02,\ldots, 0.99 )$, see e.g. \citet{HONG2016896}.
The usage of this reporting method has the additional advantage, that next to methods that forecast $\what{F}$
other forecast methods that only predict the quantiles on a dense grid $\bsPP$ are applicable. The latter is particularly relevant for quantile regression methods.

\subsection{CRPS learning and its optimality}

Now, we discuss the learning theory with respect to the CRPS~\eqref{eq_crps_qs}. As pointed out in the introduction, \citet{thorey2018ensemble} and \citet{v2020online} consider CRPS learning by aggregating across probabilities
\begin{equation}
    \wtilde{F}_{t} = \sum_{k=1}^K w_{t,k} \what{F}_{t,k}
    \label{eq_comb_crps_cw_prop}
\end{equation}
for some combination weights $w_{t,k}\in[0,1]$ and expert forecasts $\what{F}_{t,k}$. As far as we know, there is no consideration of methods that aggregate across quantiles
\begin{equation}
    \wtilde{F}_{t}^{-1} = \sum_{k=1}^K w_{t,k} \what{F}^{-1}_{t,k}
    \label{eq_comb_crps_cw_quant}
\end{equation}
for CRPS learning. Both aggregation rules~\eqref{eq_comb_crps_cw_prop} and~\eqref{eq_comb_crps_cw_quant} can be used to evaluate combination methods with respect to the optimality properties~\eqref{eq_optp_select},~\eqref{eq_optp_conv} and~\eqref{eq_optp_lin}.

As we evaluate our predictions by the CRPS a consistent choice for the loss $\ell$ is the CRPS itself. For the corresponding risk $\mathcal{R}_{t,k}$ in~\eqref{eq_risk}, we receive with~\eqref{eq_crps_qs} and Fubini
\begin{align}
    \mathcal{R}_{t,k}^{\CRPS} & =  \sum_{i = 1}^t \mathbb{E}[ \CRPS(\wtilde{F}_{i},Y_i) | \mathcal{F}_{i-1} ]                                                                                                             \\
                              & =  \sum_{i = 1}^t \mathbb{E}\left[ \left. 2 \int_{0}^1 {\QL}_p(\wtilde{F}_{i}^{-1}(p),Y_i) \right| \mathcal{F}_{i-1} \right ]     \, dp                                                   \\
                              & =    \int_{0}^1  \underbrace{2 \sum_{i = 1}^t  \mathbb{E}\left[ \left. {\QL}_p(\wtilde{F}_{i}^{-1}(p),Y_t) \right| \mathcal{F}_{i-1} \right ]  }_{=2\mathcal{R}_{t,k}^{{\QL}}(p)} \, dp .
    \label{eq_risk_crps}
\end{align}
This shows that the CRPS based risk $\mathcal{R}^{\CRPS}_{t,k}$ can be represented as integral over the pointwise risk $\mathcal{R}_{t,k}^{\QL}(p)$ scaled by 2. Using~\eqref{eq_risk} the latter can be computed as well by
\begin{align}
    \mathcal{R}_{t,k}^{\QL}(p)
     & =   \sum_{i = 1}^t  \mathbb{E}\left[ \left. {\QL}_p(\wtilde{F}_{i}^{-1}(p),Y_i)  \right| \mathcal{F}_{i-1} \right ] 
\end{align}
where $\ell(F,y;p) = {\QL}_p(F^{-1}(p),y)$.
The calculations show, that~\eqref{eq_comb_crps_cw_quant} is a useful baseline for pointwise CRPS learning.
Hence, we introduce weight functions $w_{t,k}$ on $(0,1)$ at time $t$ for expert $k$ to define the forecast $\wtilde{F}$ pointwise by
\begin{equation}
    \wtilde{F}_{t}^{-1}(p) = \sum_{k=1}^K w_{t,k}(p) \what{F}^{-1}_{t,k}(p)
    \label{eq_forecast_F_def}
\end{equation}
for all $p\in (0,1)$. Now, we are interested in finding $w_{t,k}$ such that $\wtilde{F}_t$ has minimal CRPS. Clearly, for constant $w_{t,k}$ in~\eqref{eq_forecast_F_def} we are in the standard setting~\eqref{eq_comb_crps_cw_quant}.

When analysing the optimality of CRPS learning algorithms, we can consider~\eqref{eq_risk}
on a pointwise basis for the quantile loss. Thus,
we denote the risks by
$\wtilde{\mathcal{R}}^{\QL}_{t}(p)$,
$\what{\mathcal{R}}^{\QL}_{t,\min}(p)$,
$\what{\mathcal{R}}^{\QL}_{t,\conv}(p)$ and
$\what{\mathcal{R}}^{\QL}_{t,\lin}(p)$
for the forecast combination, the best expert and the best convex and linear combination based on the quantile loss and quantile forecasts $\what{F}^{-1}_{t,k}(p)$ for probability $p\in(0,1)$.
For the evaluation it is useful to consider the corresponding scaled integrals as joint measures
\begin{align}
    \overline{\wtilde{\mathcal{R}}}^{\QL}_{t} = 2 \int_0^1
    \wtilde{\mathcal{R}}^{\QL}_{t}(p) \, dp
    \text{ , }
    \overline{\what{\mathcal{R}}}^{\QL}_{t,\min} = 2 \int_0^1
    \what{\mathcal{R}}^{\QL}_{t,\min}(p) \, dp , \\
    \overline{\what{\mathcal{R}}}^{\QL}_{t,\conv} = 2 \int_0^1
    \what{\mathcal{R}}^{\QL}_{t,\conv}(p) \, dp
    \text{ and }
    \overline{\what{\mathcal{R}}}^{\QL}_{t,\lin} = 2 \int_0^1
    \what{\mathcal{R}}^{\QL}_{t,\lin}(p) \, dp
\end{align}
for each $p\in (0,1)$ which we scale by 2 due to~\eqref{eq_crps_qs}.
Further, we denote the cumulative risks of the forecast combination, the best expert and the best convex and linear combination based on the CRPS and distribution forecasts $\what{F}_{t,k}$ by
$\wtilde{\mathcal{R}}^{\CRPS}_{t}$,
$\what{\mathcal{R}}^{\CRPS}_{t,\min}$,
$\what{\mathcal{R}}^{\CRPS}_{t,\conv}$ and
$\what{\mathcal{R}}^{\CRPS}_{t,\lin}$ respectively. As pointwise optimization looks for optimalities on larger spaces we have the following Proposition:
\begin{proposition}\label{proposition_optimality}
    It holds
    \begin{align}
        \overline{\what{\mathcal{R}}}^{\QL}_{t,\min}
         & \leq \what{\mathcal{R}}^{\CRPS}_{t,\min},
        \label{eq_risk_ql_crps_expert}                 \\
        \overline{\what{\mathcal{R}}}^{\QL}_{t,\conv}
         & \leq \what{\mathcal{R}}^{\CRPS}_{t,\conv} ,
        \label{eq_risk_ql_crps_convex}                 \\
        \overline{\what{\mathcal{R}}}^{\QL}_{t,\lin}
         & \leq \what{\mathcal{R}}^{\CRPS}_{t,\lin}
        \label{eq_risk_ql_crps_lin}.
    \end{align}
    The inequalities are strict if and only if the weights
    $w^*_{t,k}$ associated to
    $\what{\mathcal{R}}^{\QL}_{t,*}$ are not constant for any $k$, providing that the integrals exist.
\end{proposition}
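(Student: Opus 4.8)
The plan is to rewrite every CRPS cumulative risk as a scaled $p$-integral of a quantile-loss cumulative risk, via~\eqref{eq_crps_qs} and the computation already carried out in~\eqref{eq_risk_crps}, and then to obtain all three inequalities from the elementary bound ``integral of a pointwise infimum $\le$ infimum of the integral''. First I would put, for $\bsw\in\R^K$, a probability level $p\in(0,1)$ and horizon $t$,
\[
  g_p(\bsw)\;=\;\sum_{i=1}^t \E\big[\,\QL_p\big({\textstyle\sum_{k=1}^K} w_k\,\what{F}^{-1}_{i,k}(p),\,Y_i\big)\,\big|\,\mathcal F_{i-1}\big],
\]
which is nonnegative because $\QL_p\ge 0$ pointwise. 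By~\eqref{eq_crps_qs} and Fubini, exactly as in~\eqref{eq_risk_crps}, the CRPS cumulative risk of the across-quantiles combination~\eqref{eq_forecast_F_def} taken with constant weights $w_{t,k}(p)\equiv w_k$ equals $2\int_0^1 g_p(\bsw)\,dp$. Writing $\mathcal W_{\min}=\{\bse_1,\dots,\bse_K\}$, $\mathcal W_{\conv}=\{\bsw\ge\bsnull:\bsw'\bsone=1\}$ and $\mathcal W_{\lin}=\R^K$, and recalling that $\what{\mathcal R}^{\QL}_{t,*}(p)=\inf_{\bsw\in\mathcal W_*}g_p(\bsw)$ by definition of the pointwise risks, we get for $*\in\{\min,\conv,\lin\}$
\[
  \what{\mathcal R}^{\CRPS}_{t,*}=\inf_{\bsw\in\mathcal W_*}2\!\int_0^1\! g_p(\bsw)\,dp
  \qquad\text{and}\qquad
  \overline{\what{\mathcal R}}^{\QL}_{t,*}=2\!\int_0^1\!\inf_{\bsw\in\mathcal W_*} g_p(\bsw)\,dp ,
\]
where for $*=\min$ the associated weight functions are the indicator weights selecting a pointwise-best expert at each $p$.

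The inequalities~\eqref{eq_risk_ql_crps_expert}--\eqref{eq_risk_ql_crps_lin} are then immediate: for any fixed $\bsv\in\mathcal W_*$ one has $\inf_{\bsw\in\mathcal W_*} g_p(\bsw)\le g_p(\bsv)$ for every $p$, hence $\int_0^1\inf_{\bsw\in\mathcal W_*} g_p(\bsw)\,dp\le\int_0^1 g_p(\bsv)\,dp$, and minimising the right-hand side over $\bsv\in\mathcal W_*$ yields $\overline{\what{\mathcal R}}^{\QL}_{t,*}\le\what{\mathcal R}^{\CRPS}_{t,*}$. No measurable-selection argument is needed here, and finiteness of the integrands is precisely the proviso ``the integrals exist''. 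For strictness I would inspect the gap directly: letting $\bsv\in\mathcal W_*$ attain $\what{\mathcal R}^{\CRPS}_{t,*}$,
\[
  \what{\mathcal R}^{\CRPS}_{t,*}-\overline{\what{\mathcal R}}^{\QL}_{t,*}
  \;=\;2\!\int_0^1\!\Big(g_p(\bsv)-\inf_{\bsw\in\mathcal W_*} g_p(\bsw)\Big)\,dp
\]
is the integral of a nonnegative function, so it vanishes iff $g_p(\bsv)=\inf_{\bsw\in\mathcal W_*} g_p(\bsw)$ for Lebesgue-a.e.\ $p\in(0,1)$, i.e.\ iff the constant weight function $p\mapsto\bsv$ is itself pointwise optimal a.e., equivalently iff the pointwise problem defining $\what{\mathcal R}^{\QL}_{t,*}$ admits a constant optimal weight function $w^*_{t,k}$. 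Negating gives the stated equivalence: the inequality is strict precisely when the optimal pointwise weights $w^*_{t,k}$ are non-constant.

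The main obstacle is the strictness half, and it is one of careful phrasing rather than mathematical depth: since a pointwise minimiser of $g_p$ need not be unique, ``$w^*_{t,k}$ not constant'' must be read as ``no optimal pointwise weight function can be chosen constant'', not as a claim about every component of every minimiser. If one also wants an explicitly, measurably chosen minimiser $p\mapsto\bsw^*_t(p)$ to exist, one can invoke convexity of $g_p$ in $\bsw$ (inherited from convexity of $\QL_p$ in its first argument) together with a measurable-selection theorem, but this is not required for the proposition. Everything else is Fubini together with $\int\inf\le\inf\int$.
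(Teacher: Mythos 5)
Your proposal is correct and follows essentially the same route as the paper: the quantile representation~\eqref{eq_crps_qs} plus Fubini turns each CRPS risk into a scaled $p$-integral of the quantile-loss risk at constant weights, and the inequalities follow from the nonnegativity of the gap integrand (equivalently, ``integral of the pointwise infimum $\le$ infimum of the integral''), which is exactly the paper's argument. The only difference is cosmetic: you resolve strictness by noting the nonnegative gap vanishes iff a constant weight vector is pointwise optimal for a.e.\ $p$, while the paper reaches the same conclusion via continuity of $\QL_p$ and one-sided continuity of $\wtilde{F}^{-1}$ to rule out discrepancies confined to null sets.
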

Thus, pointwise procedures which have the risk $\overline{\what{\mathcal{R}}}^{\QL}_{t,*}$ always have the potential to outperform standard procedures with constant weights functions and risk $\what{\mathcal{R}}^{\CRPS}_{t,*}$. But, of course, no improvement can be expected if the optimal weight functions are constant. Otherwise, the inequalities of Proposition~\ref{proposition_optimality} will be strict.

If we would like to know in advance if it is worth considering the pointwise combination, then we require a pointwise evaluation with respect to the quantile loss on selected probabilities resp. parts of the distributions.
For instance, suppose we want to combine two experts A and B. Then, suppose expert A performs significantly better in one part of the distribution and expert B in another part. In that case, it is a clear sign that a pointwise combination should be considered.
Sometimes, no explicit formal evaluation of the experts is required because the model characteristics and assumptions imply different predictive accuracy in some distribution parts. For instance, this is the case if two experts A and B have different supports. If, for example, A is always positive and B might take negative values, then the predictive performance for lower quantiles must differ, and the pointwise combination is advantageous.

\subsection{CRPS learning using quantile regression}\label{subsec_est_weight_functions}

We have seen that pointwise CRPS learning has the potential to outperform standard CRPS learning methods. Still, the optimal pointwise weights in~\eqref{eq_forecast_F_def} have to be estimated. Theoretically, a pointwise approach has to be applied to all $p\in(0,1)$ such that the weight function $w_{t,k}$ can be specified. As we can never evaluate infinitely many values for $p$ we have to consider some finite-dimensional representation for the weight functions $w_{t,k}$ that are bounded on the unit interval $(0,1)$. A suitable option is to represent the weight functions $w_{t,k}$ by a finite-dimensional representation using basis functions.

So let $\bsvarphi=(\varphi_1,\ldots, \varphi_L)$ be bounded basis functions on $(0,1)$ with $\sum_{i=1}^L \varphi_i \equiv 1$. A popular choice are B-Splines as local basis functions as they allow fast computation due to sparsity. The weight functions $w_{t,k}$ are then represented by
\begin{equation}
    w_{t,k} = \sum_{l=1}^L \beta_{t,k,l} \varphi_l = \bsbeta_{t,k}'\bsvarphi
\end{equation}
with parameter vector $\beta_{t,k}$. Given $T$ historic forecasts $\what{F}^{-1}_{i,k}(p)$ we can estimate the $K\times L$-dimensional parameter matrix $\bsbeta_{t}$ by minimizing the corresponding CRPS using~\eqref{eq_crps_qs}
\begin{align}
    \bsbeta_{t}^{\bsvarphi\text{-CRPS}}
     & = \argmin_{ \bsbeta \in {\R}^{K\times L}} \sum_{i=t-T+1}^t
    \int_0^{1}  {\QL}_p\left( \sum_{k=1}^K \sum_{l=1}^L  \beta_{k,l}  \varphi_l(p) \what{F}^{-1}_{i,k}(p) , Y_i \right) \, d p. \\
     & = \argmin_{ \bsbeta \in {\R}^{K\times L}} \sum_{i=t-T+1}^t
    \int_0^{1}  {\rho}_p\left( Y_i - \sum_{k=1}^K \sum_{l=1}^L  \beta_{k,l}  \varphi_l(p)  \what{F}^{-1}_{i,k}(p)  \right) \, d p.
    \label{eq_qr_basis}
\end{align}
where the second line uses the shift invariance of the quantile loss and quantile regression notation $\rho_p(z) = \QL_p(0,z)  = z(p-\mathbb{1}\{z< 0\})$, \citet{koenker2017handbook}.

Still, computing~\eqref{eq_qr_basis} requires the evaluation of all distribution forecasts. As discussed this is often not possible in practice. If we restrict the evaluation to the grid of probabilities $\bsPP$ problem~\eqref{eq_qr_basis} simplifies with~\eqref{eq_crps_approx} to
\begin{align}
    \bsbeta_{t}^{\bsvarphi\text{-QR}}
    = \argmin_{ \bsbeta \in {\R}^{K\times L}} \sum_{i=t-T+1}^t  \sum_{p\in\bsPP} \rho_p \left( Y_i -  \sum_{k=1}^K \sum_{l=1}^L  \beta_{k,l} \varphi_l(p) \what{F}^{-1}_{i,k}(p) \right)
    \label{eq_qr_basis_p}.
\end{align}

In general, quantile regression problems can be solved efficiently using linear programming solvers \citep{koenker2017handbook}. However,~\eqref{eq_qr_basis_p} is not a simple quantile regression problem, but a joint quantile regression~\citep{sangnier2016joint, chun2016graphical}. The parameters $\beta_{k,l}$ are active for multiple quantiles. Thus, adequate estimation of $\beta_{k,l}$ requires solving the optimization problem with respect to $K\times L$ parameters which can be computationally costly if $K$ and $L$ are large.

However, if we choose the basis $\bsvarphi$ so that $\varphi_i=\mathbb{1}{\{p_i\}}$ on $\bsPP$ for $p_i\in \bsPP=(p_1,\ldots, p_M)$ then~\eqref{eq_qr_basis_p} can be disentangled into $L$ separate quantile regression problems.
This is
\begin{align}
    \bsw^{\text{QR}}_{t}(p)
    = \argmin_{ \bsw \in {\R}^K} \sum_{i=t-T+1}^t \rho_p \left( Y_i -  \sum_{k=1}^K w_{k} \what{F}^{-1}_{i,k}(p) \right)
    \label{eq_qr}
\end{align}
for $p\in (0,1)$
where $\what{F}^{-1}_{t,k}(p)$ are the experts for the $p$-quantile.

Quantile regression~\eqref{eq_qr} will lead to linear optimality~\eqref{eq_risk_ql_crps_lin} on $\bsPP$, as long as standard regularity conditions required for the quantile regression are satisfied \citep{koenker2001quantile}. However, we might assume further restrictions to reduce the estimation risk, e.g., the solution is a convex combination. \citet{taylor1998combining} discussed many related plausible restrictions for quantile combination concerning bias correction, positivity, affinity, among others.

\subsection{Quantile crossing}\label{subsec_quantile_crossing}

All pointwise combination methods for the CRPS (as e.g.~\eqref{eq_qr}) suffer from the potential problem of \textit{quantile crossing}~\citep{chernozhukov2010quantile}. This problem occurs if we have $\wtilde{F}_{t}^{-1}(p_i) > \wtilde{F}_{t}^{-1}(p_j)$ for some $p_i,p_j\in(0,1)$ with $p_i<p_j$.

One way to solve this problem is to impose the monotonicity constraint on the optimization objective~\citep{bondell2010noncrossing}.
Another suitable and universal approach is to rearrange (i.e., sort) the predictions so that no quantile crossing occurs.
As shown in~\citet{chernozhukov2010quantile}, this procedure will never reduce the forecasting performance in terms of the quantile loss. Thus, we recommend utilizing it always in practice.

However, we remark that for the special case of constant weight functions $w_{t,k}$, quantile crossing is no issue if no expert suffers from quantile crossing. So they satisfy
$\what{F}_{t,k}^{-1}(p_i) \leq \what{F}_{t,k}^{-1}(p_j)$ for $p_i,p_j\in(0,1)$ with $p_i<p_j$, and thus it holds
$    \wtilde{F}_{t}^{-1}(p_i) = \sum_{k=1}^K w_{t,k} \what{F}^{-1}_{t,k}(p_i)
    \leq \sum_{k=1}^K w_{t,k} \what{F}^{-1}_{t,k}(p_j)
    = \wtilde{F}_{t}^{-1}(p_j)
$.

\section{Online CRPS Learning}\label{theor}

We have seen that quantile regression is a suitable approach for CRPS learning. Still, it is a batch learning approach that might be computationally costly. Therefore, we study online learning methods for CRPS learning. First, we cover relevant aspects of online learning methods usually referred to as prediction with expert advice, which is discussed in detail in~\citet{cesa2006prediction}. This literature is driven a lot by game theory and optimization~\citep{hoi2018online}. Most of the results hold for deterministic time series but hold similarly in a stochastic setting that we consider~\citep{wintenberger2017optimal}. However, there are also connections to econometric and statistical literature that we will briefly mention as well.

\subsection{Basics on prediction with expert advice}

A key element in~\citet{cesa2006prediction} is the so called regret $R_{t,k}$ that measures the predictive accuracy of expert $k$ until time $t$. It evaluates the past performance of the experts $\what{X}_{t,k}$ compared to the performance of the forecaster $\wtilde{X}_{t}$ with respect to a target loss $\ell$
\begin{equation}
    R_{t,k} = \sum_{i = 1}^t r_{t,k} =
    \sum_{i = 1}^t \ell(\wtilde{X}_{i},Y_i) - \ell(\what{X}_{i,k},Y_i)
    \label{eq_regret}
\end{equation}
with the instantaneous regret $r_{t,k} = \ell(\wtilde{X}_{t},Y_t) - \ell(\what{X}_{t,k},Y_t)$.
In non-stochastic settings the regret satisfies $R_{t,k} = \wtilde{\mathcal{R}}_t  - \what{\mathcal{R}}_{t,k}$, see~\eqref{eq_risk}.

Given the regret vector $\bsR_{t}= (R_{t,1},\ldots,R_{t,K})'$ various aggregation rules can be defined. Popular aggregation rules include the polynomial weighted aggregation (PWA)~\citet[p. 12]{cesa2006prediction} and the exponentially weighted aggregation (EWA, also called exponentially weighted average forecaster or Hedge algorithm)~\citet[p. 14]{cesa2006prediction}. The EWA utilizes the $\softmax$ operator on the regret vector, scaled by a learning rate $\eta>0$ which may be regarded as a tuning parameter. Thus, the EWA is a combination rule defined for prior weight vector $\bsw_0$ by
\begin{align}
    w_{t,k}^{\text{EWA}}  & = Kw_{0,k} \frac{e^{\eta R_{t,k}} }{\sum_{j = 1}^K e^{\eta R_{t,j}}}
    =
    \frac{e^{-\eta \ell(\what{X}_{t,k},Y_t)} w^{\text{EWA}}_{t-1,k} }{\sum_{j = 1}^K e^{-\eta \ell(\what{X}_{t,j},Y_t)} w^{\text{EWA}}_{t-1,j} } ,
    \\
    \bsw_{t}^{\text{EWA}} & =
    K\bsw_0 \odot \softmax(\eta \bsR_{t})
    = \softmax( -\eta \ell(\what{\bsX}_{t},Y_t) + \log(\bsw^{\text{EWA}}_{t-1})  )
    \label{eq_ewa_general}
\end{align}
where the second line corresponds to the vector notation of the line above with coordinatewise evaluation of involved multiplications and functions, and $\odot$ as Hadamard product. The EWA is also analyzed and applied in econometrics and statistics literature~\citep{jore2010combining, dalalyan2012sharp, opschoor2017combining}. Due to the positivity of the exponential function $\bsw_{t}^{\text{EWA}}$ will always be a convex combination.

As shown in~\citet[p. 17]{cesa2006prediction}, \citet{kakade2008generalization} and \citet{gaillard2014second}, EWA~\eqref{eq_ewa_general} satisfies selection optimality~\eqref{eq_optp_select} if the loss $\ell$ is exp-concave and $\eta$ is chosen correctly. This holds for example for the $\ell_2$-loss and $Y_t$ bounded on $[0,1]$ with $\eta_t = \sqrt{8 \log(K)/t}$.

To achieve convex aggregation optimality~\eqref{eq_optp_conv} for a learning algorithm, usually the gradient trick is applied~\citet[p. 22]{cesa2006prediction}. For the gradient trick, we define the linearized version of the loss as $\ell^{\nabla}(x,y) = \ell'(\wtilde{X},y) x$ where $\ell'$ is the subgradient of $\ell$ in its first coordinate evaluated at forecast combination $\wtilde{X}$\footnote{This linearized version is sometimes also called pseudo-loss function~\citet{devaine2013forecasting}.}. Replacing $\ell(x,y)$ in~\eqref{eq_ewa_general} by $\ell^{\nabla}(x,y)$ yields the gradient version of the EWA. This gradient based EWA (EWAG) algorithm (often referred as exponentiated gradient aggregation, EGA) with exp-concave losses and bounded $Y_t$ satisfies both~\eqref{eq_optp_select} and~\eqref{eq_optp_conv}. Unfortunately, the quantile loss that is a key element in CRPS learning is not exp-concave. Hence, we study a more advanced algorithm.

\subsection{Bernstein online aggregation (BOA)}

Some more advanced algorithms are based on the EWA~\eqref{eq_ewa_general} and use a second-order refinement based on the variance estimate
\begin{equation}
    V_{t,k} =  \sum_{i = 1}^t r_{i,k}^2 =
    \sum_{i = 1}^t \left( \ell(\wtilde{X}_{i},Y_i) - \ell(\what{X}_{i,k},Y_i) \right)^2
    \label{eq_variance}
\end{equation}
of the instantaneous regret to receive better convergence and stability properties. For instance,~\citet{koolen2015second} and~\citet{wintenberger2017optimal} introduce such algorithms. In the former, this is called Squint; in the latter, Bernstein online aggregation (BOA). Both algorithms share very similar properties and coincide in specific situations~\citep{gaillard2018efficient, mhammedi2019lipschitz}. We focus on the BOA, as it was designed for stochastic settings that we consider. The BOA algorithm~\citet[p. 121]{wintenberger2017optimal} is defined by
\begin{align}
    \bsw_{t}^{\text{BOA}} & =  K \bsw_{0} \odot
    \softmax \left( \eta \bsR_t (1-\eta \bsV_t)  \right) \\
                          & =
    \softmax \left( -\eta \ell(\what{\bsX}_{t},Y_t)(1-\eta \ell(\what{\bsX}_{t},Y_t)) + \log( \bsw^{\text{BOA}}_{t-1} ) \right)
    \label{eq_boa}
\end{align}
with $\bsV_{t}=(V_{t,1},\ldots,V_{t,K})'$. As discussed above, the gradient trick can be applied by replacing $\ell$ with $\ell^{\nabla}$ leading to the gradient-based BOA algorithm (BOAG).

The second-order refinement in the BOA(G) compared to the EWA(G) along with adequate tuning of $\eta$ allows weakening the exp-concavity assumption on the loss while retaining optimality with respect to~\eqref{eq_optp_select} and~\eqref{eq_optp_conv}.
For example, \citet{wintenberger2017optimal}
proves an estimate for BOAG under convex losses that depends on $\eta$. Adequately setting $\eta$ requires two ingredients.

First, it requires multiple learning rates $\eta_k$ such that
they can be optimized individually for each expert $k$.
The second requirement is time-adaptive learning rates based on the so-called doubling trick~\citep[p. 17]{cesa2006prediction}.
The time adaptivity of the learning rates $\eta_{t,k}$ depends on a range estimate $\bsE_t$ of the regret and the variance estimate $\bsV_t$.
The fully adaptive BOAG algorithm with multiple learning rates from~\citet[p. 124]{wintenberger2017optimal} contains both ingredients:\footnote{Note that we consider a minor adjustment concerning some rounding issues in comparison to~\citet{wintenberger2017optimal} which does not change the algorithm's convergence properties.}
\begin{subequations}
    \begin{align}
        \bsr_{t}   & = \ell^{\nabla}(\wtilde{X}_{t},Y_t) - \ell^{\nabla}(\what{\bsX}_{t},Y_t)                                                                         \\
        \bsE_{t}   & = \max(\bsE_{t-1}, \bsr_{t}^+ + \bsr_{t}^-)                                                                                                      \\
        \bsV_{t}   & = \bsV_{t-1} + \bsr_t^{ \odot 2}                                                                                                                 \\
        \bseta_{t} & =\min\left( \left(-\log(\bsw_0) \odot \bsV_t^{\odot -1} \right)^{\odot\frac{1}{2}} ,  \frac{1}{2}\bsE_{t}^{\odot-1}\right)                       \\
        \bsR_{t}   & = \bsR_{t-1}+  \bsr_{t} \odot \left(  \bsone - \bseta_{t} \odot \bsr_{t} \right)/2 + \bsE_{t} \odot \mathbb{1}\{-2\bseta_{t}\odot \bsr_{t} > 1\} \\
        \bsw_{t}   & = K \bsw_{0} \odot \softmax\left( -  \bseta_{t} \odot \bsR_{t} + \log( \bseta_{t}) \right)\end{align}
    \label{algo:boa_general_step}
\end{subequations}
using $\odot$ also in Hadamard power notation, $\bsx^+$ and $\bsx^-$ as elementwise positive and negative part of $\bsx$, and elementwise evaluation of $\max$, $\min$ and $\log$.

We remark that~\citet[Theorem 2.1]{gaillard2018efficient} prove that the fully adaptive BOAG satisfies selection optimality under mild regularity conditions: There exist a $C>0$ such that for $x>0$ it holds that
\begin{equation}
    P\left( \frac{1}{t}\left(\wtilde{\mathcal{R}}_t - \what{\mathcal{R}}_{t,\min} \right) \leq C\left(\frac{\log(K)+\log(\log(Gt))+ x}{\alpha t}\right)^{\frac{1}{2-\beta}} \right) \geq
    1-2e^{-x}
    \label{eq_boa_opt_select}
\end{equation}
if the considered loss $\ell(\cdot, Y_t)$ is convex $G$-Lipschitz and weak exp-concave in its first coordinate (see  Appendix, Proof of Theorem~\ref{proof_theorem} for a formal definition, including the role of $\alpha$). It is clear that for $\beta=1$ the algorithm satisfies~\eqref{eq_optp_select}, despite for the $\log(\log(t))$ term. This $\log(\log(t))$ term is the standard price we have to pay for online calibration of the learning rates $\bseta_t$~\citep[p. 4]{gaillard2014second}. Furthermore, $\beta=0$ is satisfied by every Lipschitz convex loss, and any $\beta>0$ leads to fast convergence rates.

Furthermore,~\citet{wintenberger2017optimal} proves in Theorem 4.3 that the same BOA algorithm satisfies that there exist a $C>0$ such that for $x>0$ it holds that
\begin{equation}
    P\left( \frac{1}{t}\left(\wtilde{\mathcal{R}}_t - \what{\mathcal{R}}_{t,\conv} \right)  \leq C \log(\log(t)) \left(\sqrt{\frac{\log(K)}{t}} + \frac{\log(K)+x}{t}\right)  \right) \geq
    1-e^{-x}.
    \label{eq_boa_opt_conv}
\end{equation}
if the loss $\ell$ is convex and the variance satisfies $\frac{1}{t}V_{t,k} \to V_k < V  < \infty$. Thus, the algorithm is almost optimal with respect to~\eqref{eq_optp_conv}, only the $\log(\log(t))$ term leads to a small lack of optimality.

\subsection{CRPS online learning and its optimality}

As mentioned in the previous section, a key ingredient to achieving better convergence properties is considering gradient-based algorithms.
Therefore, we require the subgradient of the quantile loss.
It is defined in~\eqref{ql_gradient}.
\begin{align}
    {\QL}_p'(q,y)
     & = \mathbb{1}\{y < q\} -p \label{ql_gradient} .
\end{align}
Unfortunately, the quantile loss is not exp-concave. Thus an application of the EWAG does not guarantee optimal convergence properties. In contrast, the CRPS is exp-concave for random variables with bounded support~\citep[p. 12]{korotin2019integral}. Thus, when considering learning algorithms with the structure~\eqref{eq_comb_crps_cw_prop} we receive that the corresponding EWA algorithms satisfy selection optimality~\eqref{eq_optp_select} and for gradient based algorithms additionally convex optimality~\eqref{eq_optp_conv}. However, those optimalities hold with respect to the risks $\what{\mathcal{R}}^{\CRPS}_{t,\min}$ and $\what{\mathcal{R}}^{\CRPS}_{t,\conv}$, and suffer from the lack of optimality~\eqref{eq_risk_ql_crps_expert} and~\eqref{eq_risk_ql_crps_convex}.

For optimality of pointwise CRPS learning algorithms with respect to $\overline{\what{\mathcal{R}}}^{\QL}_{t,\min}$ and $\overline{\what{\mathcal{R}}}^{\QL}_{t,\conv}$ we have to consider the BOAG algorithm~\eqref{algo:boa_general_step} and analyze the quantile loss further concerning~\eqref{eq_boa_opt_select} and~\eqref{eq_boa_opt_conv}. By~\eqref{ql_gradient} it is clear that the quantile loss ${\QL}_p$ is convex. Thus, the fully adaptive BOAG satisfies~\eqref{eq_boa_opt_conv} with respect to ${\QL}_p$. However, if~\eqref{eq_boa_opt_select} holds is not obvious. As piecewise linear function ${\QL}_p$ is Lipschitz continuous. To check if weak exp-concavity holds for ${\QL}_p$ we have to investigate the conditional quantile risk $\mathcal{Q}_{t,p}(x) = \mathbb{E}[ {\QL}_p(x, Y_t) | \mathcal{F}_{t-1}]$. The convexity properties of $\mathcal{Q}_{t,p}$ depend on the conditional distribution $Y_t|\mathcal{F}_{t-1}$:
\begin{proposition}\label{proposition_qrisk}
    Let $Y$ be a univariate random variable with (Radon-Nikodym) $\nu$-density $f$ with $\E[|Y|]<\infty$, then for the second subderivative of the quantile risk
    $\mathcal{Q}_p(x) = \mathbb{E}[ {\QL}_p(x, Y) ]$
    of $Y$ it holds for all $p\in(0,1)$ that
    $$\mathcal{Q}_p'' = f.$$
    Additionally, if $f$ is a continuous Lebesgue-density with $f\geq\gamma>0$ for some constant $\gamma>0$ on its support $\text{spt}(f)$ then
    is $\mathcal{Q}_p$ is $\gamma$-strongly convex.
\end{proposition}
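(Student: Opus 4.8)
The plan is to express $\mathcal{Q}_p$ through the cumulative distribution function $F$ of $Y$ and then differentiate it twice. First I would use the identity
\[
{\QL}_p(x,y) \;=\; (1-p)(x-y)^+ + p(y-x)^+ ,
\]
which is immediate from the definition of ${\QL}_p$ (it is the pinball function $\rho_p$ evaluated at the residual, ${\QL}_p(x,y)=\rho_p(y-x)$), so that
\[
\mathcal{Q}_p(x) \;=\; (1-p)\,\E\big[(x-Y)^+\big] + p\,\E\big[(Y-x)^+\big] .
\]
Finiteness of $\mathcal{Q}_p(x)$ for every $x\in\R$ follows from ${\QL}_p(x,Y)\le|x-Y|\le|x|+|Y|$ and $\E[|Y|]<\infty$; combined with pointwise convexity of $x\mapsto{\QL}_p(x,y)$ this makes $\mathcal{Q}_p$ a finite convex (hence continuous) function on $\R$, so its one-sided derivatives exist everywhere and its ``second subderivative'' is well defined as the Stieltjes measure associated with its nondecreasing derivative.

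Next I would compute $\mathcal{Q}_p'$. Writing $(x-Y)^+=\int_{-\infty}^{x}\mathbb{1}\{Y<t\}\,dt$ and applying Tonelli yields $\E[(x-Y)^+]=\int_{-\infty}^{x}F(t)\,dt$, and symmetrically $\E[(Y-x)^+]=\int_{x}^{\infty}(1-F(t))\,dt$; hence
\[
\mathcal{Q}_p'(x) \;=\; (1-p)F(x)-p\big(1-F(x)\big) \;=\; F(x)-p \qquad\text{for all } p\in(0,1),
\]
which in passing re-derives that the minimiser of $\mathcal{Q}_p$ is the $p$-quantile. (One could instead differentiate under the integral sign, dominated by the bounded subgradient~\eqref{ql_gradient} and using $P(Y=x)=0$.) Differentiating once more, the second subderivative of $\mathcal{Q}_p$ is the derivative of $F$, i.e.\ the law of $Y$; by definition of the Radon--Nikodym $\nu$-density this equals $f$, so $\mathcal{Q}_p''=f$ --- pointwise when $\nu$ is Lebesgue and $f$ continuous, as in the second part, and $\nu$-a.e.\ in general.

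For the strong-convexity part I would argue directly from $\mathcal{Q}_p'=F-p$: for $x<y$ with $[x,y]\subseteq\spt(f)$, continuity of $f$ and $f\ge\gamma$ on $\spt(f)$ give
\[
\mathcal{Q}_p'(y)-\mathcal{Q}_p'(x) \;=\; F(y)-F(x) \;=\; \int_{x}^{y} f(t)\,dt \;\ge\; \gamma\,(y-x) .
\]
A nondecreasing derivative with increments at least $\gamma$ times the increments of the argument is precisely the first-order characterisation of $\gamma$-strong convexity, so $\mathcal{Q}_p$ is $\gamma$-strongly convex on $\conv(\spt(f))$, which is the content of the claim. (Off the support $f\equiv0$, so $\mathcal{Q}_p$ is affine there; the hypotheses in fact force $\spt(f)$ to be an interval, and typically a bounded one, on which the conclusion is global.)

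The main obstacle is bookkeeping rather than depth: reading ``$\mathcal{Q}_p''=f$'' correctly at the level of the reference measure $\nu$ (which is why I would phrase the second derivative as the Stieltjes/distributional derivative of the monotone function $F-p$ instead of a classical derivative), and handling the kink of ${\QL}_p$ at $x=y$, which the Tonelli representation of $\E[(x-Y)^+]$ sidesteps entirely. With those in place, both conclusions follow in a line apiece.
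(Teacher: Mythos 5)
Your proof is correct, and it takes a genuinely different route from the paper's. The paper writes $\mathcal{Q}_p(x)=\int_{\R}\rho_p(x-y)f(y)\,d\nu(y)=\rho_p * f$ and differentiates the convolution twice, using the exchange of subgradients with convolution and $\rho_p''=\delta_0$ to get $\mathcal{Q}_p''=\delta_0*f=f$; the strong-convexity claim is then read off from the second-order criterion for twice-differentiable functions. You instead integrate the positive-part decomposition of $\QL_p$ via Tonelli, obtain the explicit first derivative $\mathcal{Q}_p'=F-p$, and identify the second subderivative with the Stieltjes/distributional derivative of $F$, i.e.\ the law of $Y$, which is $f$ in the Radon--Nikodym sense ($\nu$-a.e.\ in general, pointwise under the continuity assumption of the second part). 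Your route buys a cleaner measure-theoretic footing: it avoids justifying the subgradient--convolution exchange and the formal Dirac calculus when $\nu$ is not Lebesgue, and your first-order argument for strong convexity (increments of the nondecreasing derivative $F-p$ bounded below by $\gamma$ times increments of the argument) sidesteps the twice-differentiability the paper's criterion nominally requires; the paper's convolution identity is shorter and makes the $p$-independence of $\mathcal{Q}_p''$ immediate. One small caution on your closing parenthetical: continuity of $f$ on its support with $f\geq\gamma$ there does not force $\spt(f)$ to be an interval (two disjoint intervals carrying a density bounded below by $\gamma$ are admissible, and if continuity on all of $\R$ were meant, the hypotheses would be vacuous), so what you actually establish is $\gamma$-strong convexity on intervals contained in the support; this is the same implicit domain restriction present in the paper's own statement and proof, so it is not a gap specific to your argument.
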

This quantile risk characterization is interesting as it provides a results on the convexity behavior independent of $p$. Note that the requirements for $\gamma$-strong convexity are somewhat restrictive, e.g. $Y$ must be bounded. But, we will use the proposition for $Y_t|\mathcal{F}_{t-1}$, and $Y_t|\mathcal{F}_{t-1}$ may be bounded even though $Y_t$ is unbounded.

Proposition~\ref{proposition_qrisk} is valuable for us because strong convexity with $\beta=1$ implies weak exp-concavity~\citep[p. 4]{gaillard2018efficient}.
Hence, if it holds for $\mathcal{Q}_p$, it implies almost optimality of the fully adaptive gradient-based BOA with multiple learning rates for the quantile loss $\QL_{p}$.
This is sufficient to deduce
the selection and convexity optimality~\eqref{eq_optp_select} and~\eqref{eq_optp_conv}
of pointwise CRPS learning with respect to
$\overline{\what{\mathcal{R}}}^{\QL}_{t,\min}$
and
$\overline{\what{\mathcal{R}}}^{\QL}_{t,\conv}$, despite the $\log(\log(t))$ term:
\begin{theorem} \label{thm_boag}
    The gradient based fully adaptive Bernstein online aggregation (BOAG)~\eqref{algo:boa_general_step} applied pointwise for all $p\in(0,1)$ as in~\eqref{eq_forecast_F_def} on the quantile loss ${\QL}_p$ provides an aggregation procedure
    satisfying~\eqref{eq_boa_opt_conv}
    with  minimal $\CRPS$ given by
    $\what{\mathcal{R}}_{t,\conv} =
        \overline{\what{\mathcal{R}}}^{\QL}_{t,\conv}$
    if $V < \infty$.
    If $Y_t|\mathcal{F}_{t-1}$ has a pdf $f_t$ satifying $f_t>\gamma >0$ on its support $\spt(f_t)$ then~\eqref{eq_boa_opt_select} holds with $\beta=1$ and
    $\what{\mathcal{R}}_{t,\min} = \overline{\what{\mathcal{R}}}^{\QL}_{t,\min}$.
\end{theorem}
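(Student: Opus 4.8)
The plan is to reduce Theorem~\ref{thm_boag} to the known regret bounds for the fully adaptive BOAG, namely~\eqref{eq_boa_opt_select} and~\eqref{eq_boa_opt_conv}, applied at each fixed probability level $p$, and then to integrate in $p$ using representation~\eqref{eq_crps_qs} together with Fubini. First I would fix $p\in(0,1)$ and verify the structural hypotheses needed to invoke the cited results with the loss $\ell(\cdot,Y_t;p)={\QL}_p(\cdot,Y_t)$: convexity and Lipschitz continuity of ${\QL}_p$ follow immediately from the subgradient formula~\eqref{ql_gradient}, which is uniformly bounded by $1$, so the Lipschitz constant $G$ is independent of $p$; the variance condition $\frac1t V_{t,k}\to V_k<V<\infty$ is assumed in the statement. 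This gives~\eqref{eq_boa_opt_conv} pointwise, with the pointwise convex oracle being exactly $\what{\mathcal{R}}^{\QL}_{t,\conv}(p)$. For the selection part, I would invoke Proposition~\ref{proposition_qrisk} applied to the conditional law $Y_t\mid\mathcal{F}_{t-1}$: since $f_t>\gamma>0$ on $\spt(f_t)$, the conditional quantile risk $\mathcal{Q}_{t,p}$ is $\gamma$-strongly convex, and as noted in the text (citing~\citet{gaillard2018efficient}) strong convexity yields weak exp-concavity with $\beta=1$ and $\alpha$ proportional to $\gamma$, uniformly in $p$. Hence~\eqref{eq_boa_opt_select} holds pointwise with $\beta=1$ and oracle $\what{\mathcal{R}}^{\QL}_{t,\min}(p)$.

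Next I would pass from the pointwise statements to the CRPS statement. By construction~\eqref{eq_forecast_F_def}, the pointwise combination forecast at level $p$ is $\wtilde{F}_t^{-1}(p)=\bsw_t(p)'\what{\bsX}_t(p)$ where $\what{X}_{t,k}(p)=\what{F}^{-1}_{t,k}(p)$, so the cumulative risk of the combination satisfies $\wtilde{\mathcal{R}}^{\CRPS}_t = 2\int_0^1 \wtilde{\mathcal{R}}^{\QL}_t(p)\,dp = \overline{\wtilde{\mathcal{R}}}^{\QL}_t$ by~\eqref{eq_risk_crps}. Integrating the pointwise convex bound~\eqref{eq_boa_opt_conv} over $p\in(0,1)$ and multiplying by $2$, the right-hand side is a constant (the $C$, $\log(\log t)$, $\sqrt{\log(K)/t}$ and $(\log(K)+x)/t$ factors do not depend on $p$), so it integrates to itself; the left-hand side integrates to $\frac1t(\overline{\wtilde{\mathcal{R}}}^{\QL}_t - \overline{\what{\mathcal{R}}}^{\QL}_{t,\conv})$. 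Identifying $\overline{\what{\mathcal{R}}}^{\QL}_{t,\conv}$ with the quantity the theorem calls $\what{\mathcal{R}}_{t,\conv}$ (this is the \emph{definition} adopted in the statement, not an appeal to~\eqref{eq_risk_ql_crps_convex}) gives~\eqref{eq_boa_opt_conv} for the CRPS, as claimed. The selection case is identical: integrate~\eqref{eq_boa_opt_select} with $\beta=1$; since $(\cdot)^{1/(2-\beta)}=(\cdot)^1$ is linear, integration in $p$ again just reproduces the bound with $\what{\mathcal{R}}_{t,\min}=\overline{\what{\mathcal{R}}}^{\QL}_{t,\min}$.

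The one genuinely delicate point, and the step I would spend the most care on, is the handling of the probability statements under integration in $p$. The bounds~\eqref{eq_boa_opt_select} and~\eqref{eq_boa_opt_conv} are ``with probability at least $1-ce^{-x}$'' statements for each fixed $p$, and one cannot naively integrate a family of events over an uncountable index set and retain a clean tail bound. The cleanest resolution is to observe that the BOAG recursion~\eqref{algo:boa_general_step}, when the basis $\bsvarphi$ is finite-dimensional as in Section~\ref{subsec_est_weight_functions}, is driven by a \emph{single} sequence of updates whose regret bound, after integrating the per-$p$ instantaneous regrets against $dp$, is controlled by one martingale concentration argument — i.e. one reruns the Bernstein-type proof of~\citet{wintenberger2017optimal} and~\citet{gaillard2018efficient} with the loss $\bar\ell(\bsbeta,Y_t)=2\int_0^1{\QL}_p(\bsbeta'\bsvarphi(p)\,\what{\bsX}_t(p),Y_t)\,dp$, which is itself convex, $G$-Lipschitz (same $G$), and, by Jensen applied to the uniform-in-$p$ strong convexity from Proposition~\ref{proposition_qrisk}, weak exp-concave with $\beta=1$. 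Then the single event of probability $1-ce^{-x}$ already gives the integrated bound directly, and Fubini is only used to rewrite the resulting oracle $\inf_{\bsbeta}\sum_i\E[\bar\ell(\bsbeta,Y_i)\mid\mathcal F_{i-1}]$ as $\overline{\what{\mathcal{R}}}^{\QL}_{t,\conv}$ (resp.\ $\overline{\what{\mathcal{R}}}^{\QL}_{t,\min}$ in the selection case, where the oracle ranges over unit vectors). I would present the argument in this form, remarking that it specialises to the pointwise grid $\bsPP$ by replacing $\int_0^1\cdots dp$ with $\frac1M\sum_{p\in\bsPP}\cdots$ throughout, which makes the finite-dimensionality and hence the single-martingale reduction completely transparent.
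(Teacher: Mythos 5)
Your proposal is correct and follows essentially the same route as the paper: verify pointwise that $\QL_p$ is convex and Lipschitz (uniformly in $p$, via~\eqref{ql_gradient}) and that Proposition~\ref{proposition_qrisk} applied to $Y_t|\mathcal{F}_{t-1}$ gives $\gamma$-strong convexity and hence weak exp-concavity with $\beta=1$, then invoke Theorem 4.3 of \citet{wintenberger2017optimal} for~\eqref{eq_boa_opt_conv} and Theorem 2.1 of \citet{gaillard2018efficient} for~\eqref{eq_boa_opt_select}, and finally transfer the guarantees to the CRPS by applying the concentration results to the aggregated loss rather than integrating pointwise tail bounds. The only real difference is at that last step, where the paper establishes (A1)--(A2) for the CRPS via the dyadic approximations $\text{CRPS}_j$ (with Lipschitz constants converging to $3/4$) and a limiting argument, whereas you argue directly on the integrated loss using Jensen and the $p$-uniform constants; this is the same device, and your version is, if anything, more explicit about why a naive integration of the per-$p$ probabilistic statements over the uncountable index set would not suffice.
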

The assumption for the convex aggregation property is very mild; we only require the convergence of the variance terms $\frac{1}{t}V_{t,k} \to V_k<V$. This is satisfied if the instantaneous regret sequence $r_{t,k}$ is stationary. In contrast, the requirements for the selection property are more restrictive as the assumption on the density of $Y_t|\mathcal{F}_{t-1}$ implies that it has to be bounded. We remark that the optimality of Theorem~\ref{thm_boag} holds even though quantile crossing might happen, see Subsection~\eqref{subsec_quantile_crossing}. Of course, Theorem~\ref{thm_boag} is more of a theoretical value, as we can never evaluate the quantile loss for infinitely many probabilities $p\in(0,1)$ in practice. As mentioned, a common reporting situation considers a dense equidistant grid of probabilities $\bsPP$.
So quantile forecasts $\what{F}_{t,k}^{-1}(p)$ for $p\in \bsPP$ are revealed every time $t$ for each expert $k$ and provide a suitable approximation of the CRPS, see~\eqref{eq_crps_approx}.

\subsection{Relationship to Bayesian model averaging}

Bayesian model averaging (BMA) is a popular combination method in the econometric literature~\citep{fragoso2018bayesian, aastveit2019evolution}. \citet{raftery2005using} introduced how BMA can be used for forecasting.
On the first view, there is no clear link between the learning theory explained in the previous subsection and BMA.

BMA under uninformative priors can be approximated by~\eqref{eq_ewa_general}~while replacing the regret $\bsR_{t}$ by the negative Bayesian information criterion (BIC) and choosing $\eta=\frac{1}{2}$
\begin{align}
    \bsw_{t}^{\text{BMA}} & =
    \softmax\left(-\frac{1}{2} {\bm{\BIC}}_t\right)
    \label{eq_bma_general}
\end{align}
with $K$-dimensional BIC-vector ${\bm{\BIC}}_t = (\BIC_{t,1},\ldots, \BIC_{t,K})'$~\citep{hansen2008least}. Thus, exchanging the regret which measures the past predictive performance by a measure for model validility leads directly to BMA.  This BMA approximation procedure has been used in different combination tasks~\citep{cheng2015forecasting, maciejowska2020pca}. The BIC values of the $K$ expert models could be computed in a batch setting, or in an online setting, e.g. using RLS or Kalman filtering.

However, as we minimize the CRPS or the quantile loss, we should adjust the BIC definition in~\eqref{eq_bma_general} with respect to the considered loss. To do so, we consider a definition of the BIC based on the quantile loss as used by~\citet{lu2015jackknife} and \citet{wang2019jackknife}. For each probability $p\in(0,1)$ we define
\begin{equation}
    {\BIC}_{t,k}(p)= 2  \log \left( \frac{1}{T}\sum_{i=t-T+1}^{t} \rho_p(\varepsilon_{t,k}) \right) +  \log(T) \frac{\text{DF}_{t,k}(p)}{T}
    \label{eq_bic}
\end{equation}
where $T$ is the sample size (calibration window length) of the considered model, $\text{DF}_{t,k}$ the (effective) degrees of freedom in the model and $\varepsilon_{t,k}(p)$ the residuals. When replacing $\rho_p(\varepsilon_{t,k}(p))$ by $\varepsilon_{t,k}^2$ this turns to the standard least square BIC, which corresponds to the likelihood-based BIC under the normality assumption. We see that BMA requires some in-sample model to evaluate residuals, while this is not required for online learning algorithms like EWA or BOA. Still, the non-parametric definition~\eqref{eq_bic} also allows for applications where the prediction task is performed by quantile regression (on the dense grid $\bsPP$). The key difference between the BMA approximation as used in~\eqref{eq_bma_general} and EWA~\eqref{eq_ewa_general} is that EWA evaluates the quantile loss of the past forecasts at time $t$ whereas BMA evaluates the quantile loss of the most recent model.

From the applied point of view, nothing stops us from replacing the BIC in~\eqref{eq_bma_general} by another information criterion. Alternatively, the factor $\frac{1}{2}$ can be replaced by another learning rate $\eta$:
\begin{align}
    \bsw_{t}^{\text{BMA}}(\eta) &
    = \softmax\left(-\eta {\bm{\BIC}}_t\right)
    \label{eq_bma_eta}
\end{align}
In this case, we may search for optimal values $\eta$ based on data-driven methods, e.g., on past performance. We recommend this procedure, as the BMA approximation~\eqref{eq_bma_general} is derived for the likelihood-based BIC, whereas the utilization of the quantile loss-based BIC~\eqref{eq_bic} might require a different scaling.

\section{CRPS learning extensions and implementation remarks}\label{sec_extentions}

This section briefly covers extensions to the considered CRPS learning algorithms and provides some implementation remarks. The extensions hold for online learning methods in the same way as for standard batch procedures based on quantile regression. First, we discuss a penalized smoothing procedure for the weights in CRPS learning algorithms. Next, we discuss forgetting in learning algorithms and extensions by the three simple shrinkage operators: fixed shares, soft-thresholding, and hard-thresholding. Note that the extensions are mainly motivated by intuition, application in similar situations in statistics and machine learning, and less by theoretical results. We partially back up the extension features by simulation and empirical studies afterward.

\subsection{Penalized smoothing}\label{subsec_smooth}

Our weight functions $w_{t,k}$ are modeled as linear combinations of basis functions as explained in Subsection~\ref{subsec_est_weight_functions}. The corresponding function space, resp. the basis function $\bsvarphi=(\varphi_1,\ldots, \varphi_L)$ restrict the shape of possible weight function estimates. If we consider B-Splines as a basis, it is relatively straightforward to extend this standard smoothing method to penalized splines (P-Splines), at least in batch learning settings~\citep{wang2011smoothing}. The idea of P-Splines is to regularize the roughness of the fitted curve. This is particularly useful if a high number of basis functions is considered. For example, in the extreme case, the number of basis functions $L$ equals the size of the probability grid $\bsPP$, as in situation~\eqref{eq_qr}.

Theoretically, we can extend~\eqref{eq_qr_basis} by a roughness penalty term:
\begin{align}
     & \bsbeta_{t}^{\text{SmCRPS}}     \label{eq_qr_basis_pen}
    \\
     & = \argmin_{ \bsbeta \in {\R}^{K\times L}} \sum_{i=t-T+1}^t  \int_{0}^1 {\rho}_p \left( Y_i - \sum_{k=1}^K \sum_{l=1}^L \beta_{k,l}  \varphi_l(p)\what{F}^{-1}_{i,k}(p) \right) \,d p
    + \text{SmPen}_{\lambda}(\bsbeta) . \nonumber
\end{align}
A standard choice for $\text{SmPen}_{\lambda}(\bsbeta)$ would penalize the weight functions $ \bsbeta_{t,k}' \bsvarphi$ based on their second derivative, as applied in generalized additive models (GAM)~\citep{wood2017generalized}.
However, this increases the difficulty of the optimization problem even further and makes it impractical for applications.

Fortunately, there is a suitable penalized smoothing alternative that can be applied to $\bsbeta_{t,k}' \bsvarphi$ in batch and online settings, based on~\eqref{eq_qr_basis_p}. The idea is to smooth the weight functions after the estimation (i.e., updating) step. Therefore, we take another set of bounded basis functions $\bspsi=(\psi_1,\ldots, \psi_H)$ on $(0,1)$ that we will use for penalized smoothing.

Then we can represent the weights as
$$w_{t,k} = \sum_{h=1}^H b_{k,h} \psi_h = \bsb_k'\bspsi $$
with parameter vectors $b_k$. We estimate $\bsb_k=(b_{k,1},\ldots,b_{k,H})$ by penalized $L_1$- and $L_2$-smoothing which minimizes for each $k$ given $\bsbeta_{t,k}$ by
\begin{equation}
    \| \bsbeta_{t,k}' \bsvarphi  - \bsb_k' \bspsi  \|^2_2 + \lambda ( \alpha \| \mathcal{D}_{1}  (\bsb_k' \bspsi)  \|^2_2 + (1-\alpha)\| \mathcal{D}_{2}  (\bsb_k' \bspsi)  \|^2_2 )
    \label{eq_function_smooth}
\end{equation}
with differential operator $\mathcal{D}_d$ of order $d$. The differential order characterizes the type of smoothing penalty. The parameter $\lambda\geq 0$ chracterises the roughness penalty and  $\alpha$ mixes between the penalty in the first and second derivative.
Typically, $\alpha=0$ is considered along with cubic B-Splines to penalize for roughness~\citep{wang2011smoothing, wood2017generalized}.
However, we prefer using $\alpha>0$ here. As for $\alpha>0$, we are smoothing towards constant weights over $\bsPP$ for $\lambda\to \infty$.
and not towards a linear relationship between weights and probabilities as for $\alpha=0$.
From our point of view, there is no clear argument
in CRPS learning that supports shrinkage towards a linear relationship. In contrast, shrinkage towards a constant brings us directly to the non-pointwise CRPS-learning theory of constant weight functions. However, this does not mean
that the penalized smoothing situation with $\lambda\to \infty$ gives the same result as the plain CRPS learning approach with $\bsvarphi = \varphi_1 \equiv 1$. The first simulation study (see~\ref{sim1}) gives more insights.
Our simulation experiments show that the choice $\alpha=1/2$ leads to suitable results in practice.

In applications, we are considering the function based optimization problem~\eqref{eq_function_smooth} only on the equidistant probability grid $\bsPP=(p_1,\dots,p_M)$ of size $M$.
If we consider a B-Spline basis functions $\bspsi$,
then an explicit solution based on ordinary least squares exists for~\eqref{eq_function_smooth}.
If the basis functions have additionally equidistant knots, this explicit solution has a simple ridge regression representation. The smoothed weights matrix $\bsw_t$ is then given by
\begin{equation}
    \bsw_{t}(\bsPP) = \bsB(\bsB'\bsB+ \lambda (\alpha \bsD_1'\bsD_1 + (1-\alpha) \bsD_2'\bsD_2))^{-1} \bsB' \bsvarphi (\bsPP) \bsbeta_{t}
    \label{eq_smoothing_solution_w}
\end{equation}
with matrix $\bsB= \bspsi(\bsPP)$, penalty matrix
$\bsD_d= \Delta^d \bsI$ and $\Delta$ denotes the difference operator.

At first glance, it seems odd that we consider another basis $\bspsi$ for the penalized smoothing approach than we used for the original basis $\bsvarphi$ for the weight function representation. However, we do so mainly to emphasize that these can be regarded as two separate problems.
Indeed, for applications we recommend to use the same basis $\bspsi = \bsvarphi$.
The latter shows excellent empirical performance and is easier to implement and communicate.

A penalized smoothing algorithm for CRPS learning using BOA~\eqref{algo:boa_general_step} is presented in Algorithm~\ref{algo:boag_smooth} where $\bsQL_{\bsPP} = ({\QL}_{p_1},\ldots,{\QL}_{p_M})$ and $\bssoftmax(\bsx) = (\softmax(\bsx_1), \ldots, \softmax(\bsx_L))$. The attentive reader will notice that the algorithm uses the BOA update step not for the weights vectors $\bsw_{t}$ as in~\eqref{algo:boa_general_step} but for the $\bsbeta_{t} = (\beta_{t,l,k})_{l,k} \in \R^{L\times K}$ matrices. However, the algorithm has the same objective as in~\eqref{eq_qr_basis_p}. Thus, $\beta_{t,l,k}\varphi_l$ may be regarded as weight functions for the expert predictions. The special case $\bsvarphi = \varphi_1 \equiv 1$ leads to constant weight functions with respect to the CRPS. For $\varphi_i=\mathbb{1}{\{p_i\}}$ on $\bsPP$ (e.g. choosing linear B-Splines with equidistant knots of distance $p_i - p_{i-1}$) $\bsbeta_{t,l}$ coincides with the pointwise application of BOA~\eqref{algo:boa_general_step} with respect to $\QL_p$ on $\bsPP$. In this case Algorithm~\ref{algo:boag_smooth} satiesfies Theorem~\ref{thm_boag} on $\bsPP$ as well.

\begin{algorithm}[ht!]
    \caption{\label{algo:boag_smooth} P-Smoothed CRPS Bernstein Online Aggregation}
    \begin{algorithmic}
        \State \textbf{input:}
        3-dimensional array of expert predictions $(\what{X}_{t,p,k})_{t,p,k}$ and vector of prediction targets $Y_t$ for $t=1,\ldots, T$, $p\in \bsPP = (p_1,\ldots,p_M)$, $k=1,\ldots, K$
        \State \textbf{initialize:} $\bsw_0=(w_{0,1},\ldots, w_{0,K})$,
        $\lambda\geq 0$, $M\times L$-dimensional
        P-Spline matrices $\bsB$, $\bsD_1$ and $\bsD_2$ as in~\eqref{eq_smoothing_solution_w} for the basis $\bsvarphi = (\varphi_1,\ldots,\varphi_L)$ on $\bsPP$,
        $\bsE_{0} = \bsV_0 = \bsR_0 = \bsnull$
        \State $\bsbeta_0 = \bsB^{\text{pinv}}\bsw_0(\bsPP)$
        \For{$t$ in $1,\ldots, T$}
        \State $\wtilde{\bsX}_{t} = \text{Sort}\left( \bsw_{t-1}'(\bsPP) \what{\bsX}_{t} \right)$
        \State $\bsr_{t} = \frac{L}{M} \bsB' \left({\bsQL}_{\bsPP}^{\nabla}(\wtilde{\bsX}_{t},Y_t)- {\bsQL}_{\bsPP}^{\nabla}(\what{\bsX}_{t},Y_t)\right) $ 
        \State      $  \bsE_{t}    = \max(\bsE_{t-1}, \bsr_{t}^+ + \bsr_{t}^-)    $
        \State $ \bsV_{t}     = \bsV_{t-1} + \bsr_{t}^{ \odot 2} $
        \State $ \bseta_{t}  =\min\left( \left(-\log(\bsbeta_{0}) \odot \bsV_{t}^{\odot -1} \right)^{\odot\frac{1}{2}} , \frac{1}{2}\bsE_{t}^{\odot-1}\right)                                                                                                  $
        \State $\bsR_{t}     = \bsR_{t-1}+  \bsr_{t} \odot \left( \bsone - \bseta_{t} \odot \bsr_{t} \right)/2 + \bsE_{t} \odot \mathbb{1}\{-2\bseta_{t}\odot \bsr_{t} > 1\}  $
        \State $\bsbeta_{t}     = K \bsbeta_{0} \odot \bssoftmax\left( -  \bseta_{t} \odot \bsR_{t} + \log( \bseta_{t}) \right)$
        \State $\bsw_{t}(\bsPP) = \bsB(\bsB'\bsB+ \lambda (\alpha \bsD_1'\bsD_1 + (1-\alpha) \bsD_2'\bsD_2))^{-1} \bsB' \bsB \bsbeta_{t}$
        \EndFor
    \end{algorithmic}
\end{algorithm}

Moreover, we want to remark that
the initial weight matrix $\bsw_0$ has to be transformed in the
$\bsvarphi$-space to receive an initial $\bsbeta_0$.
Thus, we have to solve $\bsB\bsbeta_0 = \bsw_0$. In general $\bsB$ is not invertible, but a simple application of the pseudoinverse yielding $\bsbeta_0 = \bsB^{\text{pinv}}\bsw_0$ is completely sufficient for applications.

In practice, it is crucial to choose an adequate tuning parameter $\lambda$. Therefore, we recommend choosing $\lambda$ based on the past performance of the forecaster as described in Subsection~\ref{subsec:impl}.
Note that the application of smoothing in~\eqref{eq_smoothing_solution_w} may look suboptimal as we optimize $\beta$ with respect to the squared loss, not to the CRPS as in~\eqref{eq_qr_basis_pen}. However, we can still optimize $\lambda$ with respect to the target CRPS.

\subsection{Forgetting}

Suppose we expect structural breaks in either the prediction target or the expert forecasts. In that case, it makes sense to reduce the weight of past information compared to recent information.
Therefore, algorithms are sometimes extended by a forgetting factor (also called discounting factor). Exponential forgetting is usually preferred in online learning due to the recursive structure.
The regret $R_{t,k}$~\eqref{eq_regret} can be written recursively as $ R_{t,k}  = R_{t-1,k} + \ell(\wtilde{F}_{t},Y_t) - \ell(\what{F}_{t,k},Y_t)$.
The regret with forgetting factor $\xi\in[0,1]$ discounts the regret $R_{t-1,k}$ in the definition above
\begin{align}
    R_{t,k}(\xi) & =  (1-\xi) R_{t-1,k} + \ell(\wtilde{F}_{t},Y_t) - \ell(\what{F}_{t,k},Y_t)\label{eq_regret_forget}
\end{align}
where $\xi=0$ corresponds to no forgetting.
In practice, optimal $\xi$ values are usually close to $0$. 
We want to highlight that in online procedures forgetting should be applied
to all recursive hidden state variables. In case of
the fully adaptive BOA~\eqref{algo:boa_general_step} these
are the regret vector $\bsR_t$, the range vector $\bsE_t$ and the variance vector $\bsV_t$.
For the quantile regression setting exponential forgetting corresponds to the weighted regression problem
\begin{align*}
    \bsbeta_{t}^{\bsvarphi\text{-QR}}(\xi)
    = \argmin_{ \bsbeta \in {\R}^{K\times L}} \sum_{i=t-T+1}^t  \sum_{p\in\bsPP} (1-\xi)^{t-i} \rho_p \left( Y_i -  \sum_{l=1}^L \sum_{k=1}^K \beta_{k,l} \varphi_l(p) \what{F}^{-1}_{i,k}(p) \right).
\end{align*}
Even though exponential forgetting is popular in online learning, we cannot expect to preserve any asymptotic guarantees because the exponential function decays too fast \citet[p.32]{cesa2006prediction}. However, polynomial discounts can be considered to keep convergence guarantees.
However, exponential forgetting dominates the academic literature on online learning, see, e.g., \citet{guo2018online, messner2019online,ziel2021smoothed}.

\subsection{Shrinkage operators}\label{subsec_fixed_share}

In statistical learning theory, shrinkage operators help to reduce overfitting problems by shrinking a solution.
The P-Spline smoothing can be interpreted as a shrinkage operator applied to the weight functions $\bsw_t = \bsB \bsbeta_{t}$.
However, simple shrinkage operators can also be applied to $\bsbeta_{t}$.

We want to briefly discuss three shrinkage operators that are sometimes used in online learning settings: the fixed share operator $\mathcal{F}$, the soft-thresholding operator $\mathcal{S}$, and the hard-thresholding operator $\mathcal{H}$.
They are defined as
\begin{align}
    \mathcal{F}(x;\phi)   & = \phi/K  + (1-\phi) x ,       \\
    \mathcal{S}(x;\nu)    & = \sign(x)||x|-\nu| ,          \\
    \mathcal{H}(x;\kappa) & = x  \mathbb{1}\{|x|>\kappa \}
\end{align}
for some $\phi\in[0,1]$, $\nu\geq0$ and $\kappa\geq0$.

The fixed share operator shrinks towards the naive combination. The naive combination is preferable if there is no prior information on the expected performance of the experts available. For some shrinkage problems, even theoretical guarantees for improvements exist~\citep{tu2011markowitz, cesa2012mirror}. In the online learning context it is used, e.g., in \citet{cesa2012mirror, gonzales2021new}.

The thresholding operators $\mathcal{S}$ and $\mathcal{H}$ are also sometimes considered in online learning contexts~\citep{dalalyan2012sharp, gaillard2017sparse}.
Thresholding is a technique to receive sparse solutions. Both appear in several situations for specific linear model estimators. Most notably, soft-thresholding is the key operator in the coordinate descent algorithm for estimating the lasso~\citep{friedman2007pathwise}.
Let us remark that the application of any of the threshold operators potentially violates affinity constraints (incl. the convexity constraint). In those situations, projections to the desired solution space should be applied.

\subsection{Implementation Remarks}\label{subsec:impl}

Algorithm~\ref{algo:boag_smooth} is implemented along with
other algorithms (e.g. EWA(G) \citet{gaillard2015forecasting}, ML-Poly(G) \citet{gaillard2014second}) in
the freely available \texttt{R}-package \texttt{profoc}~\citep{profoc_package}.
The functions are implemented using RcppArmadillo to limit computation time~\citet{eddelbuettel2014rcpparmadillo}, sparse matrix algebra is utilized since the basis matrices $\bsB$ are usually sparse, and, in the case of P-Smoothing, the hat-matrix in~\eqref{eq_smoothing_solution_w} will be pre-computed as it stays constant during learning. The \texttt{profoc} package also supports batch learners for comparison purposes.

Next to the P-Spline smoothing parameters ($\lambda$ and $\alpha$), the forget parameter ($\xi$) and shrinkage operator parameters ($\phi$, $\nu$, and $\kappa$), \texttt{profoc} offers even more tuning parameters, e.g., for choosing non-equidistant grids for the B-Spline basis and for choosing basis functions of different degrees.

We recommend choosing the parameter combination based on the past cumulative performance of the forecaster. This approach is also implemented in \texttt{profoc}. It considers all possible parameter combinations at each step. Finally, it selects the combination weights $\bsw_{t}(\bsPP)$ corresponding to the parameter combination with the lowest cumulative loss up to $t-1$.

\section{Simulation Study}\label{sim}

We present three simulation studies in this chapter. The first compares different smoothing methods and discusses the role of $\alpha$. The second compares the performance of different online learning algorithms while still using the simple data generating process (DGP) used for the first simulation. After that, a third study presents the application of an exponential forget (see~\eqref{eq_regret_forget}) in an environment of changing optimal weights.
Each study presents the mean of 1000 repetitions. We always use $K=2$ experts as it allows us to compute the optimal weight functions analytically. We initialize the weights uniformly: $w_{0,k} \equiv 1/K$ and use percentiles as probability grid $\bsPP=(0.01,\dots,0.99)$.

\subsection{Smoothing Methods}\label{sim1}

The DGP for the first two studies is kept simple. We consider two experts with constant predictions over time. The true process is standard normal:
\begin{align}
    Y_t & \sim \mathcal{N}(0,\,1)     ,
    \widehat{X}_{t,1}  \sim \widehat{F}_{1}  = \mathcal{N}(-1,\,1) \text{ and }
    \widehat{X}_{t,2}  \sim \widehat{F}_{2}  = \mathcal{N}(3,\,4) .
    \label{eq:dgp_sim1}
\end{align}

That is, optimal weight functions $w_{t,k}$ do not change over time. However, they are not constant over all $p \in \bsPP$. We use this DGP to compare the following specifications of the proposed \textbf{BOAG} algorithm~\ref{algo:boag_smooth}. First, the basis $\bsvarphi$ is specified so that $\varphi_i=\mathbb{1}{\{p_i\}}$, which means that no smoothing is applied. We refer to this as the \textbf{Pointwise} solution. The first smooth solution \textbf{B-Smooth} is obtained by considering cubic B-spline basis' with equidistant knots and knot distances of $\{0.005, 0.02, 0.035, \ldots, 0.485, 0.5\}$. The knot distance is selected in each step with respect to the past performance. Further, the standard CRPS-Learning solution \textbf{B-Constant} is considered by using the constant basis $\bsvarphi = \varphi_1 \equiv 1$. Another specification  \textbf{P-Smooth} uses P-Spline smoothing with $\varphi_i = \psi_i = \mathbb{1}{\{p_i\}}$. The smoothing parameter $\lambda$ is chosen from an exponential grid $\Lambda= \{2^x|x\in \{-15,-14,\ldots,25\}\}$ based on the past performance. The last model which is considered approximates the limiting case of $\lambda\to \infty$ by using $\lambda = 2^{30}$ and is denoted by \textbf{P-Constant}. We choose the knot distance for \textbf{B-Smooth} and $\lambda$ for \textbf{P-Smooth} based on the past performance of the forecaster as described in Subsection~\ref{subsec:impl}.

\begin{figure}[h!]
    \fbox{
        \begin{subfigure}[t]{0.475\textwidth}
            \includegraphics[width=\textwidth]{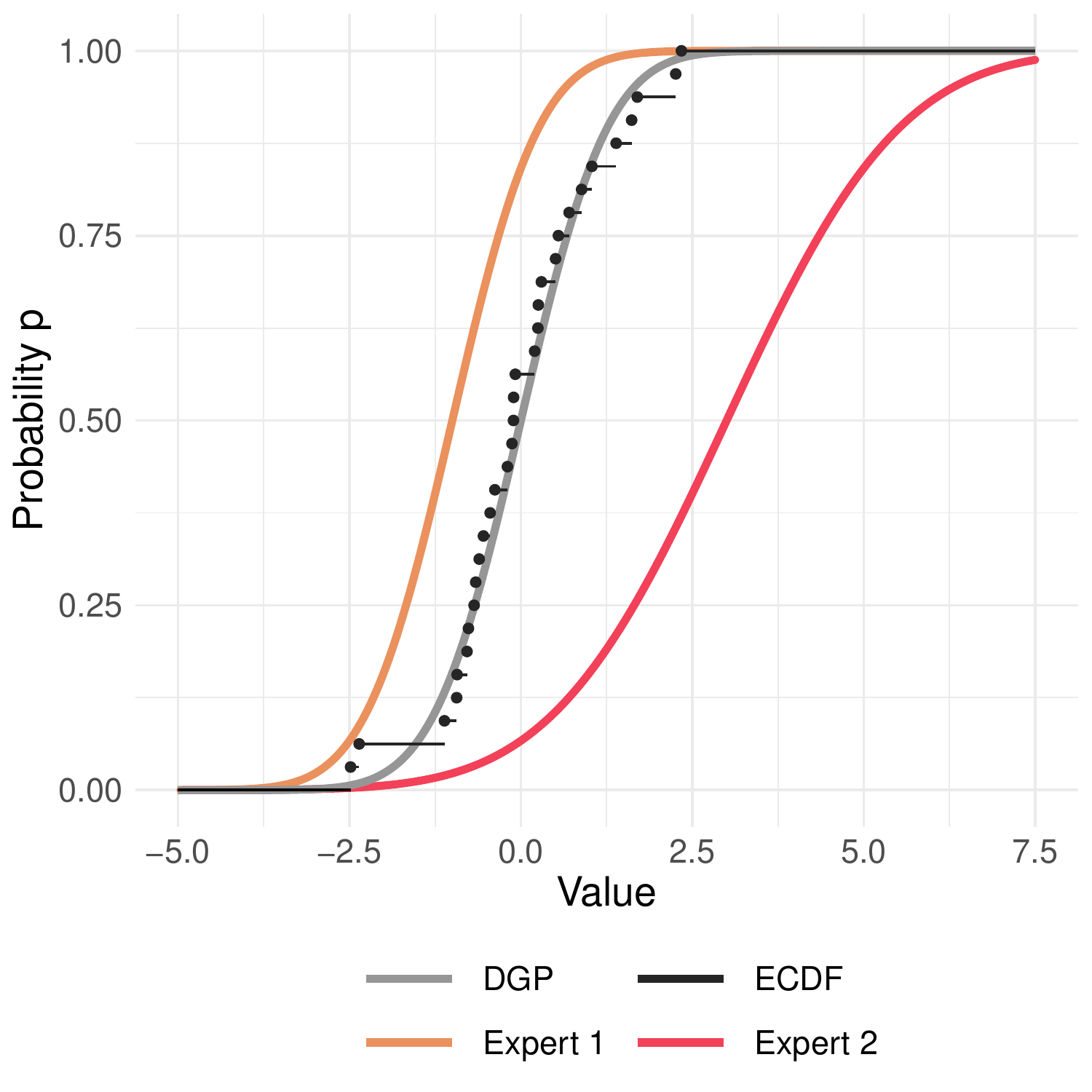}
            \caption{Two experts forecasting the true distribution (gray) of which 32 realizations are sampled (black)}
        \end{subfigure} \hfill
        \begin{subfigure}[t]{0.475\textwidth}
            \includegraphics[width=\textwidth]{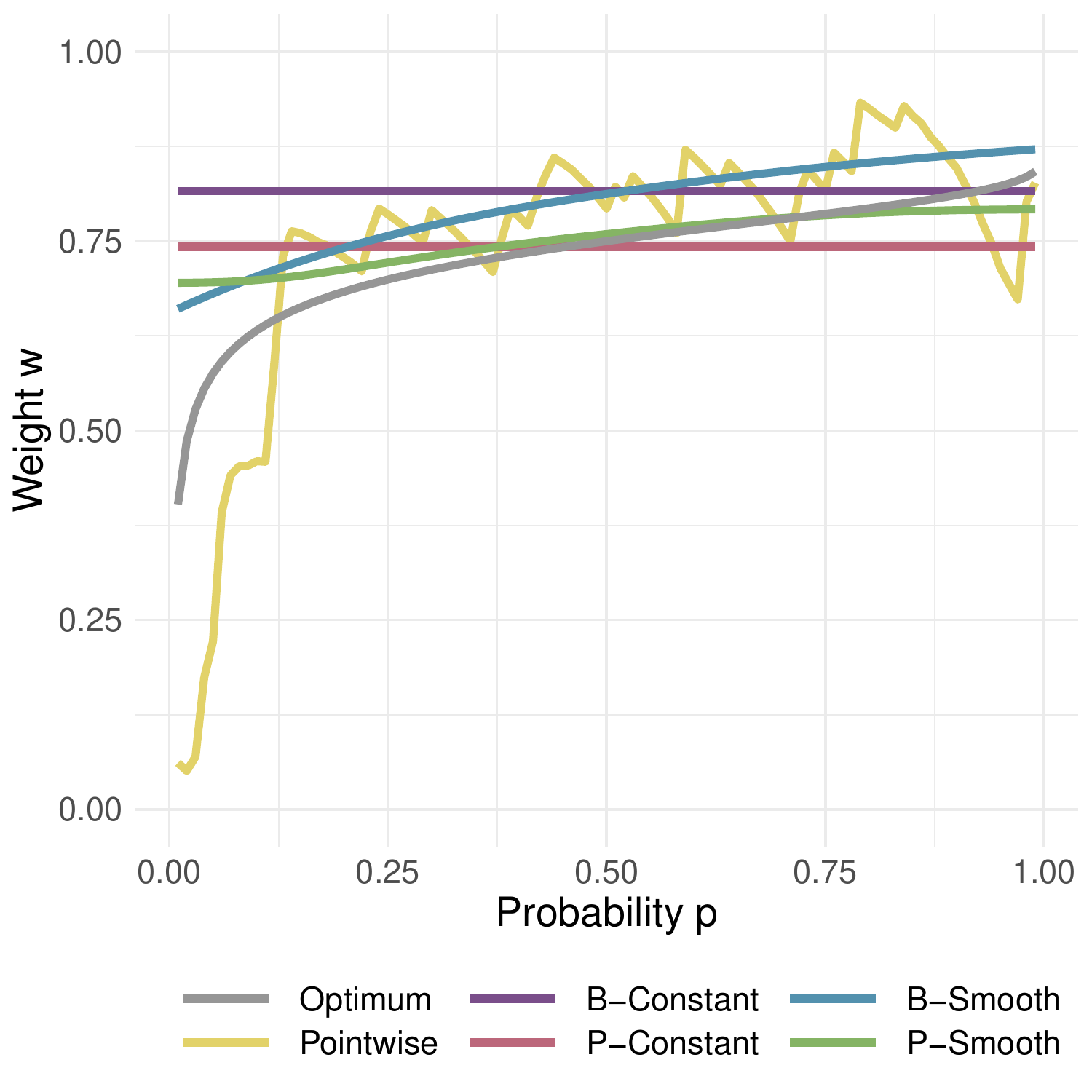}
            \caption{Weights of Expert 1 calculated by different specifications of the proposed algorithm and the theoretcal optimum (gray)}
        \end{subfigure}
    }
    \caption{Example of probabilistic combination of two expert forecasts (left side), weights of Expert 1 estimated using suitable combination algorithms and the true weighting function (right side).}
    \label{fig:mot_data}
\end{figure}

Figure~\ref{fig:mot_data} presents an example of the combination task. It shows the cumulative distribution function of the DGP and both experts as well as the empirical cumulative distribution function of a sample (left side). Further, it shows the resulting optimal weight function (gray) as well as combination weights calculated by various specifications of the proposed \textbf{BOAG} algorithm~\ref{algo:boag_smooth} (right side). Evaluating the performance using this figure is not possible since it only presents one single sample. However, we observe that the true combination weights vary over $p \in \bsPP$. That is, \textbf{B-Constant} and \textbf{P-Constant} will never convergence against the optimum.

\begin{figure}[!h]
    \centering
    \fbox{\includegraphics[width=0.975\columnwidth]{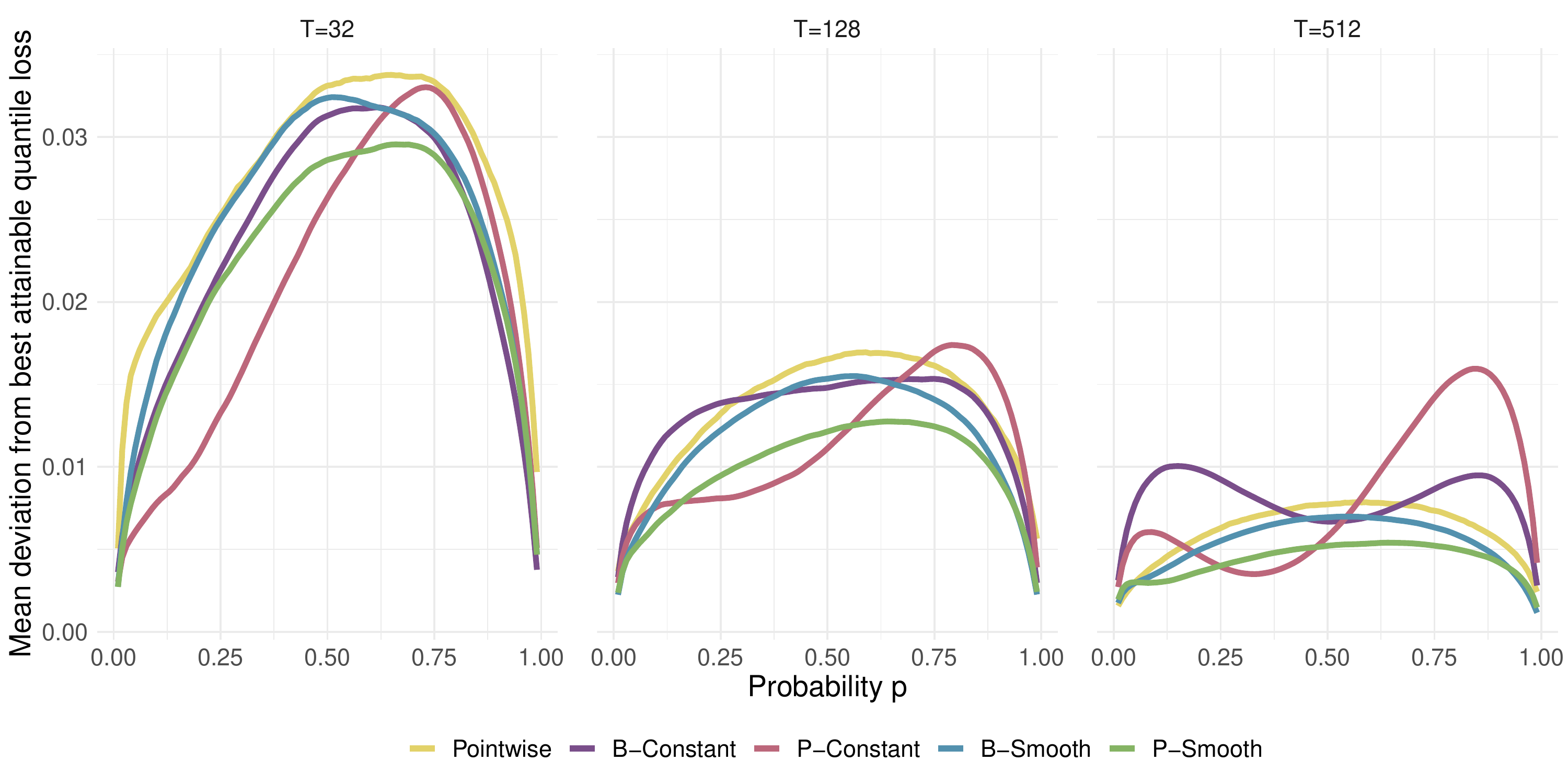}}
    \caption{Mean distance to the best attainable quantile loss for different specifications of the proposed \textbf{BOAG} algorithm~\ref{algo:boag_smooth} and different sample sizes $T$. Smaller values correspond to better performance.
    }\label{fig:sim_1_loss}
\end{figure}

Figure~\ref{fig:sim_1_loss} shows the results of this first study. It depicts the mean distance to the best possible loss with respect to $\bsPP$. The results are presented for 32, 128, and 512 observations. The distance to the best possible loss decreases as the number of observations increases, as expected. The pointwise solution outperforms both constant solutions if a sufficient amount of observations is supplied.
Furthermore, both smoothing methods improve over the pointwise solution regardless of the sample size. The \textbf{P-Smooth} specification yields the best results overall.

\begin{figure}[!h]
    \centering
    \fbox{\includegraphics[width=0.975\columnwidth]{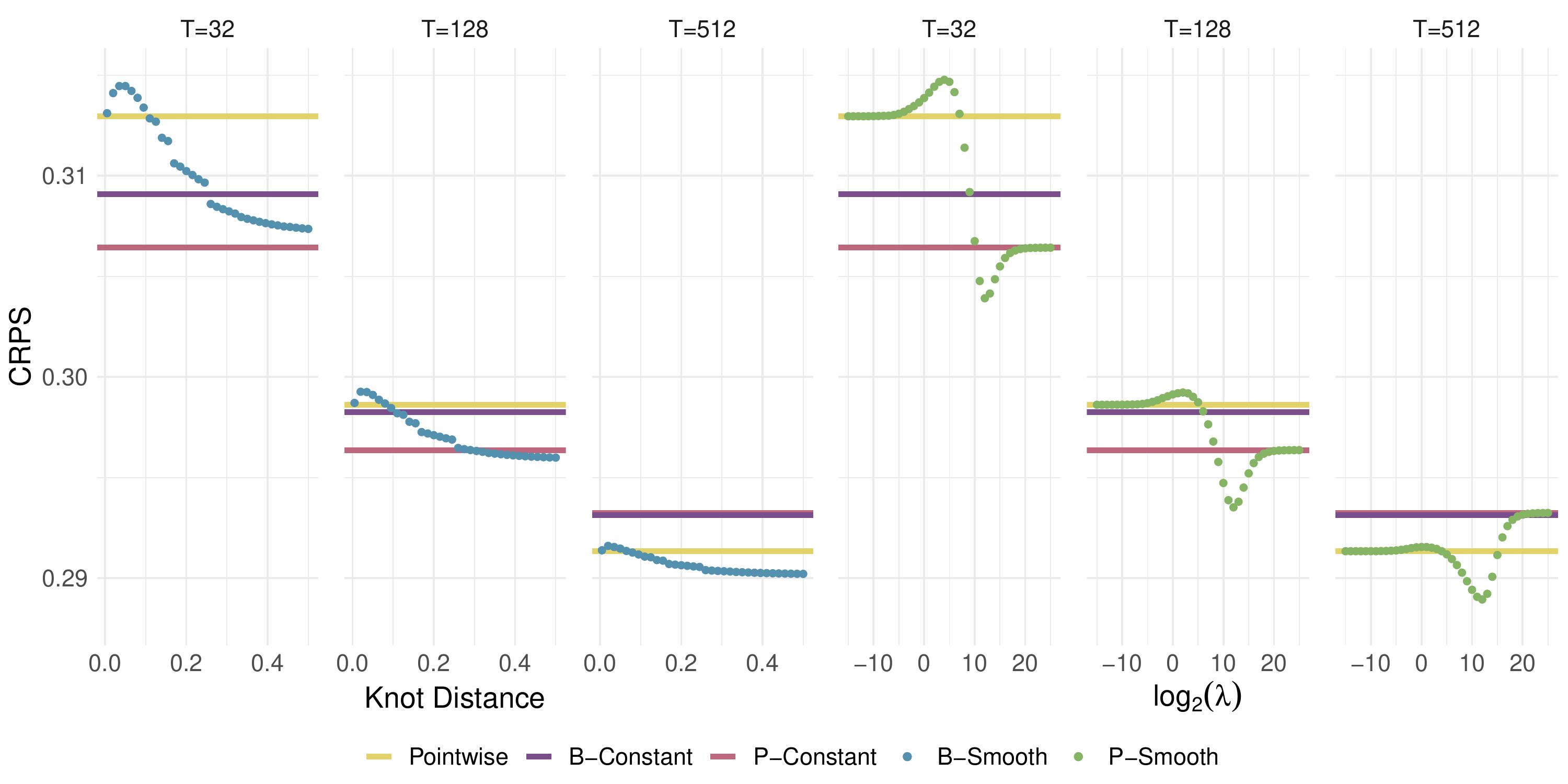}}
    \caption{CRPS values for different specifications of the proposed \textbf{BOAG} algorithm~\ref{algo:boag_smooth} with respect to $\lambda$ and the knot distance. Smaller values correspond to better performance.
    }\label{fig:lam_loss}
\end{figure}

The results of the \textbf{B-Smooth} and \textbf{P-Smooth} specifications depend on the knot distance of the basis and $\lambda \in \Lambda$ respectively. Figure~\ref{fig:lam_loss} illustrates this relation. It depicts the average CRPS for different $\lambda$ values and knot distances. It is noteworthy that \textbf{P-Constant} clearly outperforms \textbf{B-Constant} for small $T$ while for large $T$ both specifications coincide with regards to their performance. Note that \textbf{B-Constant} applies a single learning rate at all quantiles while \textbf{P-Constant} assigns learning rates pointwise for each quantile. This shows that pointwise evaluation procedures potentially outperform constant ones even if both assign constant weights in the end.

The left hand side of Figure~\ref{fig:lam_loss} shows that for \textbf{B-Smooth} a large knot distance (i.e, less knots) is preferable. This is caused by the DGP that produces smooth weight functions. Evaluating the \textbf{P-Smooth} solution with respect to $\lambda$ shows that $\lambda = 2^{12}$ yields the lowest CRPS. Figure~\ref{fig:lam_loss} also shows that \textbf{P-Smooth} yields better results than \textbf{B-Smooth} which was already apparent in Figure~\ref{fig:sim_1_loss}.

The \textbf{P-Smooth} results presented in Figures~\ref{fig:sim_1_loss} and~\ref{fig:lam_loss} were obtained using $\alpha = 0.5$. Figure~\ref{fig:crps_vs_ndiff} shows that all $\alpha > 0$ yield similar results. Therefore, setting $\alpha = 0.5$ is reasonable as it ensures penalization with respect to slope and roughness.

\begin{figure}[!h]
    \centering
    \fbox{\includegraphics[width=0.975\columnwidth]{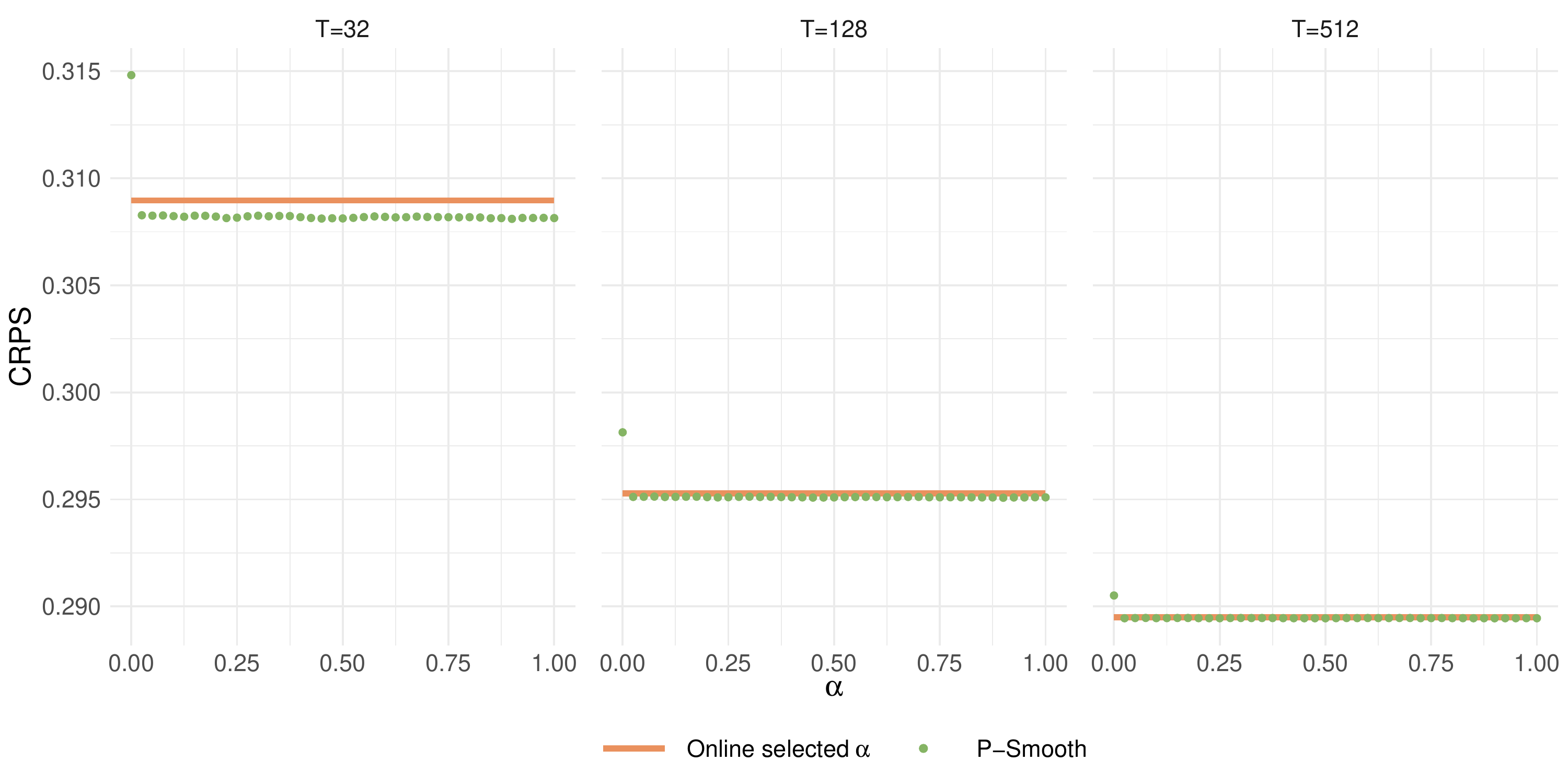}}
    \caption{CRPS values of the \textbf{P-Smooth} specification of the proposed \textbf{BOAG} algorithm~\ref{algo:boag_smooth} with respect $\alpha$ (see~\eqref{eq_function_smooth}). Smaller values correspond to better performance.
    }\label{fig:crps_vs_ndiff}
\end{figure}

\subsection{Popular learning algorithms compared}

This second simulation study uses the same DGP~\eqref{eq:dgp_sim1} to compare the proposed \textbf{BOAG}~\eqref{algo:boag_smooth} with other popular learning algorithms. In particular, we compare to \textbf{EWAG}\footnote{We consider an $\eta$-grid of $\mathcal{E}= \{(1-\sqrt{x}) |x\in \{0,0.05, 0.1, \ldots,0.95\}\}$.}~\eqref{eq_ewa_general} and the gradient version of the ML-Poly algorithm (\textbf{ML-PolyG}~\citet{gaillard2014second}).

\begin{figure}[h]
    \centering
    \fbox{\includegraphics[width=0.975\columnwidth]{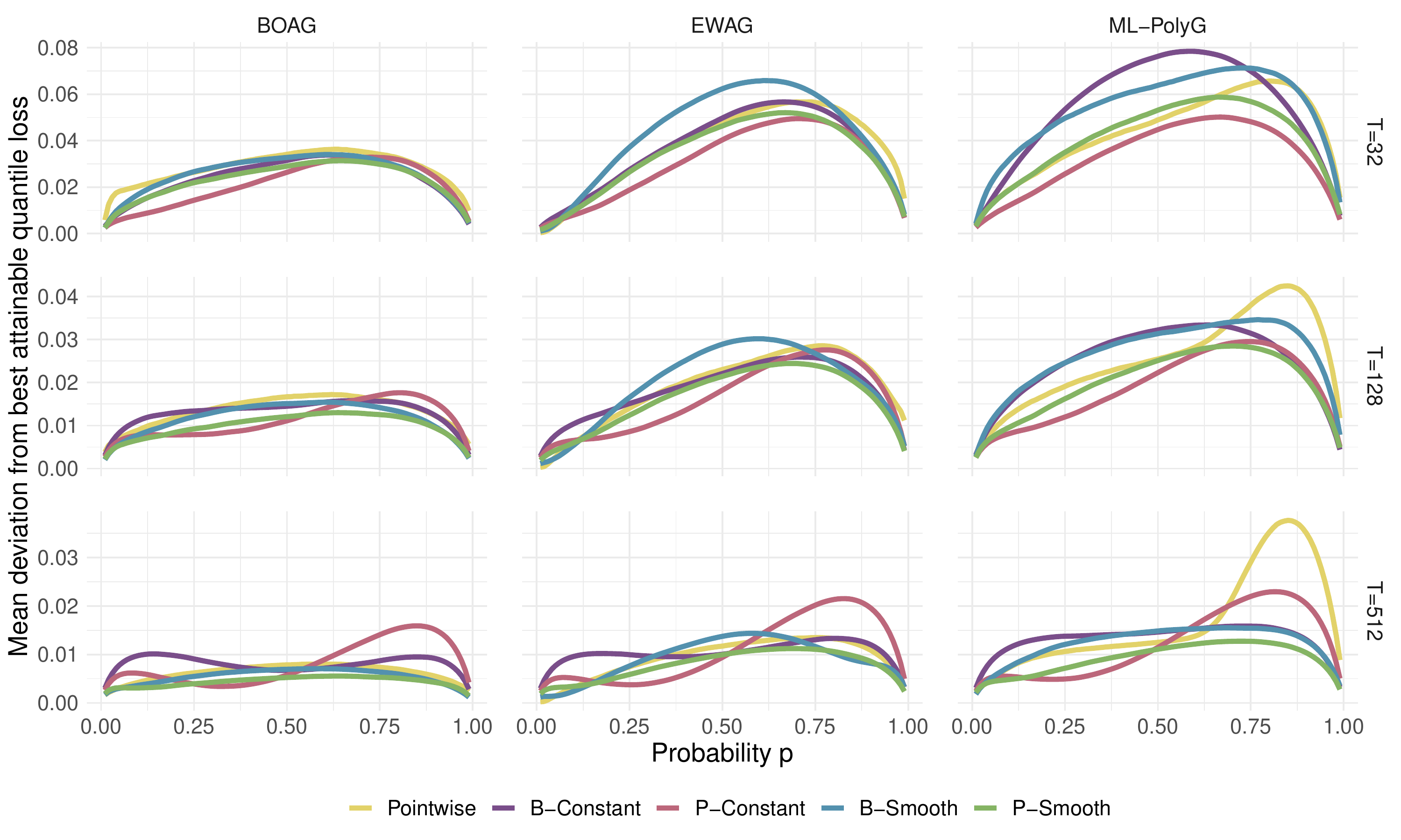}}
    \caption{Mean distance to the best attainable quantile loss for different algorithms, specifications, and sample sizes $T$. Smaller values correspond to better performance.
    }\label{fig:sim2_algos}
\end{figure}

Figure~\ref{fig:sim2_algos} presents the results of this study. The columns show the algorithms, while the rows depict different sample sizes. Analogous to the first study, we show the distance to the best possible loss. This distance decreases with increasing sample size for all three algorithms. However, the \textbf{BOAG} algorithms show lower distances for all sample sizes compared to \textbf{EWAG} and \textbf{ML-PolyG}. Additionally, we see that the constant solutions do not converge to zero since the true $\bsw_{t}$ are not constant over $\bsPP$.

To summarize, \textbf{BOAG} shows the fastest convergence indeed. Moreover, the results also support the application of the \textbf{P-Smooth} specification.

\subsection{Changing weights}

Now consider the following DGP with changing optimal weights. This DGP is set up as follows:
\begin{subequations}
    \begin{align}
        Y_t               & \sim \mathcal{N}\left(0.15 \operatorname{asinh}(\mu_t),\,1\right)
        \text{ with }
        \mu_t              = 0.99 \mu_{t-1} + \varepsilon_t   , \  \varepsilon_t \sim  \mathcal{N}(0,1),
        \\
        \widehat{X}_{t,1} & \sim      \widehat{F}_{1}  = \mathcal{N}(-1,\,1)                              \text{ and }
        \widehat{X}_{t,2}  \sim       \widehat{F}_{2}  = \mathcal{N}(3,\,4).
    \end{align}\label{eq:dgp_sim2}
\end{subequations}
We use this DGP to illustrate the importance of the forget operator~(\ref{eq_regret_forget}). Taking the entire history of regrets into account is suboptimal in settings with changing optimal weights. This is illustrated by Figure~\ref{fig:sim_forget}. It shows the optimal weights of expert two as well as weights obtained from \textbf{BOAG} with- and without an exponential forget for \textbf{Pointwise} and \textbf{P-Smooth} specifications. The models with forget are calculated by considering a $\xi$-grid of forget rates $\{2^x|x\in \{- \lceil \log_2(T)\rceil,\ldots, -2, -1\}\}$ and selecting $\xi$ in each step with respect to the past performance. The colors of Figure~\ref{fig:sim_forget} represents the weights assigned to expert two over time (x-axis) and for every $p \in \bsPP$ (y-axis).

Figure~\ref{fig:sim_forget} shows that the solutions without forget fail at precisely adjusting their weights in the long-term. Using a forget results in a more rapid weight adjustment which is favorable here. However, this Figure only presents a single example.
\begin{figure}[h]
    \centering
    \fbox{\includegraphics[width=0.975\columnwidth]{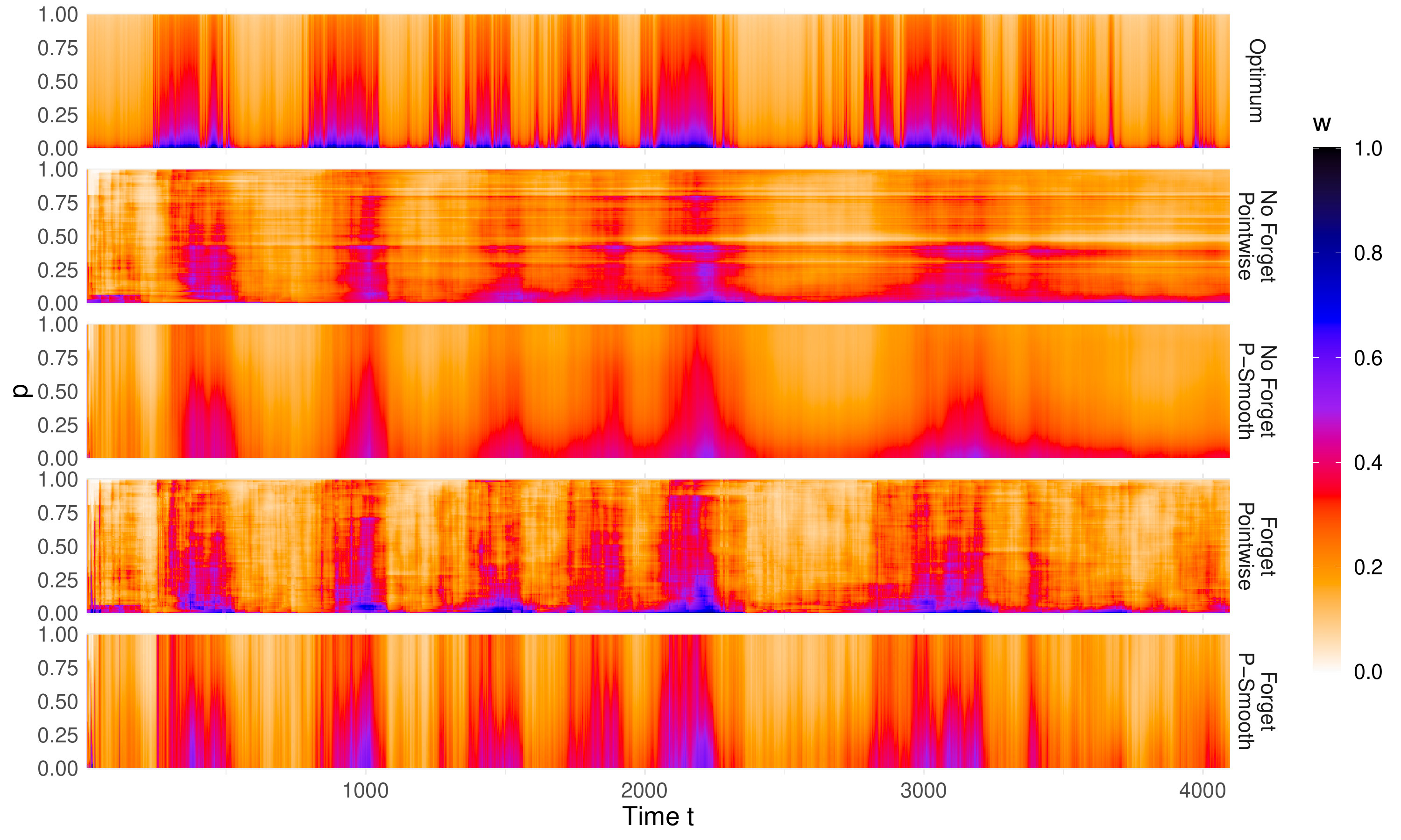}}
    \caption{Exemplary weights of expert two in an environment of weight changing over $t$ from a single optimization run. The rows present the theoretically best weights and the weights calculated using different specifications of the proposed \textbf{BOAG} algorithm~\ref{algo:boag_smooth}.}\label{fig:sim_forget}
\end{figure}
Table~\ref{tab:forget_crps} presents the average CRPS and respective standard deviations from 1000 simulation runs. The lowest CRPS was obtained by considering the forget grid as described above and using the \textbf{P-Smooth} specification.
\begin{table}[!h]
    \centering
    {\begin{tabular}{cccc}
            \hline
            \multicolumn{2}{c}{\textbf{No Forget}}                               & \multicolumn{2}{c}{\textbf{Forget}}                                                                                                                                                                                 \\                                                                                                                   \hline
            \textbf{Pointwise}                                                   & \textbf{P-Smooth}                                                     & \textbf{Pointwise}                                                   & \textbf{P-Smooth}                                                    \\
            \hline
            \cellcolor[rgb]{1,0.5,0.55} $\underset{(3.42\times10^{-6})}{0.2956}$ & \cellcolor[rgb]{1,0.518,0.5} $\underset{(3.42\times10^{-6})}{0.2953}$ & \cellcolor[rgb]{1,0.768,0.5}$\underset{(3.35\times10^{-6})}{0.2943}$ & \cellcolor[rgb]{0.5,0.9,0.5}$\underset{(3.31\times10^{-6})}{0.2930}$ \\
            \hline
        \end{tabular}}
    \caption{Mean CRPS (standard deviation) for different specifications of the proposed \textbf{BOAG} algorithm~\ref{algo:boag_smooth}. Obtained from 1000 simulation runs. Colored with respect to the CRPS.
    }\label{tab:forget_crps}
\end{table}

\section{Application}\label{app}

Now we apply the proposed CRPS learning algorithm to a forecasting task in environmental finance.
More precisely, we present a study for forecasting the prices of the European emission allowances (EUA) as traded in the
European Union Emissions Trading Scheme (EU ETS). As highlighted by \citet{segnon2017modeling}, and \citet{dutta2018modeling} ARIMA-GARCH-type models dominate the econometric finance literature, see e.g.~\citet{liu2021forecasting}. However, machine learning methods, mainly considering artificial neural networks, are utilized as well \citep{atsalakis2016using, hao2020modelling, garcia2020short}. One of the reasons is likely the rather uncertain effect of external impact factors~\citep{koop2013forecasting}.

For this illustrative application, we ignore external regressors as in \citet{benz2009modeling} and \citet{dutta2018modeling}. We consider daily EUA prices of the month-ahead future covering the full Phase III period from January 2013 until December 2020.
The price data with all $2092$ observations and the corresponding differences of log prices is illustrated in Figure~\ref{fig_eua_intro}.

\begin{figure}[htbp]
      \centering
      \fbox{\includegraphics[width=0.975\columnwidth]{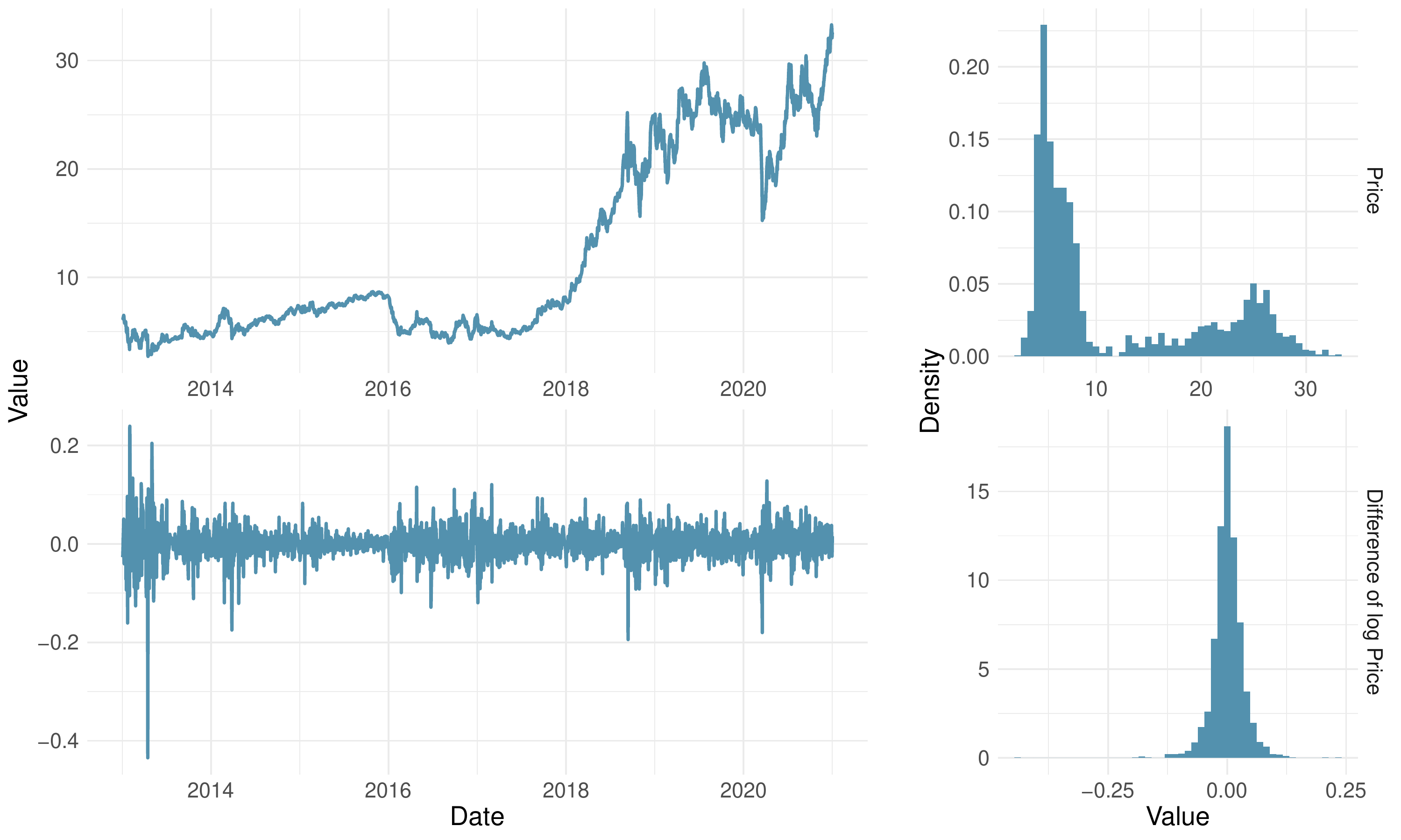}}
      \caption{Prices and differences of log Prices with corresponding histograms of the
            European Emission Allowances (EUA) during Phase III from January 2013 until December 2020}
      \label{fig_eua_intro}
\end{figure}

For the probabilistic CRPS learning study that we evaluate on $\bsPP = (0.01,\ldots, 0.99)$ we consider a selection of five plausible experts.
These are an exponential smoothing model, a quantile regression approach, and three ARIMA-GARCH-type models.
The models are selected to cover a relatively broad range of features, e.g., conditionally homoskedastic and heteroskedastic, light tails and heavy tails, stationary and instationary, parametric and non-parametric.
We consider the following models for the log-prices $Y_t$:
\begin{itemize}
      \item[i)] Simple exponential smoothing with additive errors (\textbf{ETS-ANN}):\newline
            $Y_{t} = l_{t-1} + \varepsilon_t$ with $l_t = l_{t-1} + \alpha \varepsilon_t $ and $\varepsilon_t \sim \mathcal{N}(0,\sigma^2)$.
      \item[ii)] Quantile regression (\textbf{QuantReg}):\newline
            For each $p\in \bsPP$ we assume
            $F^{-1}_{Y_t}(p) = \beta_{p,0} + \beta_{p,1} Y_{t-1} + \beta_{p,2} |Y_{t-1}-Y_{t-2}|$.
      \item[iii)] ARIMA(1,0,1)-GARCH(1,1) with Gaussian errors (\textbf{ARMA-GARCH}): \newline
            $Y_{t} = \mu + \phi(Y_{t-1}-\mu) + \theta \varepsilon_{t-1} + \varepsilon_t$ with
            $\varepsilon_t = \sigma_t Z$,
            $\sigma_t^2 = \omega + \alpha \varepsilon_{t-1}^2
                  + \beta \sigma_{t-1}^2$ and
            $Z_t \sim \mathcal{N}(0,1)$.
      \item[iv)] ARIMA(0,1,0)-I-EGARCH(1,1) with Gaussian errors (\textbf{I-EGARCH}):
            \newline
            $Y_{t} = \mu + Y_{t-1}  + \varepsilon_t$ with
            $\varepsilon_t = \sigma_t Z$,
            $\log(\sigma_t^2) = \omega + \alpha Z_{t-1}+ \gamma (|Z_{t-1}|-\mathbb{E}|Z_{t-1}|)
                  + \beta \log(\sigma_{t-1}^2)$ and
            $Z_t \sim \mathcal{N}(0,1)$.
      \item[v)] ARIMA(0,1,0)-GARCH(1,1) with student-t errors (\textbf{I-GARCHt}):\newline
            $Y_{t} = \mu + Y_{t-1}  + \varepsilon_t$ with
            $\varepsilon_t = \sigma_t Z$,
            $\sigma_t^2 = \omega + \alpha \varepsilon_{t-1}^2
                  + \beta \sigma_{t-1}^2$ and
            $Z_t \sim t(0,1, \nu)$.
\end{itemize}

We estimate the parameters of model ii) using quantile regression for each $p\in\bsPP$ separately. All the other models are estimated using maximum likelihood. We conduct a rolling window forecasting study with a window length of 250, which covers about a year. This design gives us $T=1841$ out-of-sample observations.

Concerning the combination methods, we consider uniform weighting ($w_{t,k} = 1/K$, \textbf{Naive}) the online learning methods \textbf{EWAG}, \textbf{BOAG}, \textbf{ML-PolyG} and \textbf{BMA}~\eqref{eq_bma_eta}. Additionally, we consider two quantile regression batch learners based on~\eqref{eq_qr_basis_p} with a rolling window size of 30. One of the quantile regressions without any constraints \textbf{QRlin} and the other with convexity constraint \textbf{QRconv}.

We consider the \textbf{Pointwise}, \textbf{B-Constant}, \textbf{P-Constant}, \textbf{B-Smooth} and \textbf{P-Smooth} specifications (see Section~\ref{sim1}). However, the general \textbf{B-Smooth} specifications (see~\eqref{eq_qr_basis_p}) using quantile regression were excluded from the experiment because of their very high computational cost.
The standard CRPS-learning specification \textbf{B-Constant} was used in combination with \textbf{ML-PolyG} in \citet{thorey2018ensemble}. Further, \citet{zamo2021sequential} considered \textbf{EWA} and \textbf{EWAG} using \textbf{B-Constant}. 

For the P-smoothing methods we consider $\alpha=0.5$ and the $\lambda$-grid
$\Lambda= \{0\}\cup \{2^x|x\in \{-4,-3,\ldots,13\}\}\cup \{2^{30}\}$ where the latter value approximates the $\lambda\to \infty$ situation. For the B-spline smoothing methods we consider 19 cubic B-spline basis on equidistant knot distances with distances $2^{\mathcal{K}}$ where $\mathcal{K}$ is a linear grid from $\log_2(.01)$ to $0.5$ of size 19. We augment the 19 basis functions by the constant function.
For EWAG and BMA we specify an $\eta$-grid of learning rates $\mathcal{E}= \{2^x|x\in \{-3,-2.8,\ldots,9\}\}$ additionally. We choose the tuning parameters online based on the past performance of the forecaster as described in Subsection~\ref{subsec:impl}.

Table~\ref{tab_eua} presents the results. It shows the CRPS difference of the considered experts and aggregation methods with respect to the \textbf{Naive} aggregation procedure and aggregated across time. Additionally, the table shows the p-value of the Diebold-Mariano test (DM-test), which tests if the considered forecaster provides better forecasts than \textbf{Naive}.

\begin{table}[h]
      \centering
      \resizebox{1\textwidth}{!}{
            \begin{tabular}{ccccc}
                  \hline
                  \textbf{ETS-ANN}                                                    & \textbf{QuantReg}                                                   & \textbf{ARMA-GARCH}                                      & \textbf{I-EGARCH}                                        & \textbf{I-GARCHt}                                         
                  \\
                  \hline
                  \cellcolor[rgb]{1,0.5,0.55}2.101 {\footnotesize (\textgreater.999)} & \cellcolor[rgb]{1,0.5,0.55}1.358 {\footnotesize (\textgreater.999)} & \cellcolor[rgb]{1,0.682,0.5}0.520 {\footnotesize (.993)} & \cellcolor[rgb]{1,0.688,0.5}0.511 {\footnotesize (.999)} & \cellcolor[rgb]{0.944,1,0.5}-0.037 {\footnotesize (.406)}
                  \\
                  \hline
            \end{tabular}
      }
      \resizebox{\textwidth}{!}{
            \setlength{\tabcolsep}{2pt}
            \begin{tabular}{rcccccc}
                                      & \textbf{BOAG}                                                     & \textbf{EWAG}                                                & \textbf{ML-PolyG}                                            & \textbf{BMA}                                             & \textbf{QRlin}                                                     & \textbf{QRconv}                                          \\
                  \hline
                  \textbf{Pointwise}  & \cellcolor[rgb]{0.541,0.914,0.5}-0.170 {\footnotesize(.055)}      & \cellcolor[rgb]{0.822,1,0.5}-0.089 {\footnotesize(.175)}     & \cellcolor[rgb]{0.645,0.948,0.5}-0.141 {\footnotesize(.112)} & \cellcolor[rgb]{1,0.973,0.5} 0.032 {\footnotesize(.771)} & \cellcolor[rgb]{1,0.5,0.55}3.482 {\footnotesize(\textgreater.999)} & \cellcolor[rgb]{0.989,1,0.5}-0.019 {\footnotesize(.309)} \\
                  \textbf{B-Constant} & \cellcolor[rgb]{0.73,0.977,0.5}-0.118 {\footnotesize(.146)}       & \cellcolor[rgb]{0.916,1,0.5}-0.049 {\footnotesize(.305)}     & \cellcolor[rgb]{0.819,1,0.5}-0.090 {\footnotesize(.218)}     & \cellcolor[rgb]{1,0.969,0.5} 0.038 {\footnotesize(.834)} & \cellcolor[rgb]{1,0.5,0.55}4.002 {\footnotesize(\textgreater.999)} & \cellcolor[rgb]{1,0.671,0.5} 0.539 {\footnotesize(.996)} \\
                  \textbf{P-Constant} & \cellcolor[rgb]{0.658,0.953,0.5}-0.138 {\footnotesize(.020)}      & \cellcolor[rgb]{0.867,1,0.5}-0.070 {\footnotesize(.137)}     & \cellcolor[rgb]{0.675,0.958,0.5}-0.133 {\footnotesize(.026)} & \cellcolor[rgb]{1,0.968,0.5} 0.039 {\footnotesize(.851)} & \cellcolor[rgb]{1,0.5,0.55}5.275 {\footnotesize(\textgreater.999)} & \cellcolor[rgb]{1,0.986,0.5} 0.009 {\footnotesize(.683)} \\
                  \textbf{B-Smooth}   & \cellcolor[rgb]{0.533,0.911,0.5}-0.173 {\footnotesize(.062)}      & \cellcolor[rgb]{0.879,1,0.5}-0.065 {\footnotesize(.276)}     & \cellcolor[rgb]{0.647,0.949,0.5}-0.141 {\footnotesize(.118)} & \cellcolor[rgb]{0.934,1,0.5}-0.042 {\footnotesize(.386)} & -                                                                  & -                                                        \\
                  \textbf{P-Smooth}   & \cellcolor[rgb]{0.5,0.9,0.5}\textbf{-0.182} {\footnotesize(.039)} & \cellcolor[rgb]{0.767,0.989,0.5}-0.107 {\footnotesize(.121)} & \cellcolor[rgb]{0.578,0.926,0.5}-0.160 {\footnotesize(.065)} & \cellcolor[rgb]{1,0.968,0.5} 0.040 {\footnotesize(.804)} & \cellcolor[rgb]{1,0.5,0.55}3.495 {\footnotesize(\textgreater.999)} & \cellcolor[rgb]{1,0.999,0.5}-0.012 {\footnotesize(.369)} \\
                  \hline
            \end{tabular}
      }
      \caption{CRPS difference to \textbf{Naive} (scaled by $10^4$) of single experts, and six combination methods with five weight function specifications. Negative values correspond to better performance (the best value is bold). Additionally, we show the p-value of the DM-test, testing against \textbf{Naive}. The cells are colored with respect to their p-values (the greener better).
      }\label{tab_eua}
\end{table}

We observe that all experts except \textbf{I-GARCHt} perform clearly worse than the \textbf{Naive}. When looking at the aggregation methods, we see that the online learning methods \textbf{BOAG}, \textbf{EWAG} and \textbf{ML-PolyG} tend to outperform \textbf{Naive}.
We receive the best results using \textbf{BOAG}. Moreover, \textbf{P-Smooth} provides better results than \textbf{Pointwise} for the three online learners. In contrast, \textbf{B-Smooth} fails to do so.
Still, we see that the lowest p-value is obtained for \textbf{BOAG} using \textbf{P-Constant}. The reason is that pointwise and smoothing procedures lead to more volatile predictions.
Furthermore, we observe that the quantile regression methods are hardly worth considering, especially regarding the high computational effort. Therefore we exclude them from the proceeding graphs and focus on online methods.

Figures~\ref{fig:eua_ql} and~\ref{fig:eua_loss_time} present the main results when evaluating forecasting performance with respect to \textbf{Naive} across probabilities and time.
From the quantile loss differences in Figure~\ref{fig:eua_ql} we see that the experts' performance varies clearly across the probability, as mentioned a key sign that pointwise CRPS learning should be considered. The \textbf{ETS-ANN} performs fine in the center and poorly otherwise. The \textbf{I-GARCHt} performs very well in the lower part of the distribution but in the upper part not clearly better than other experts. Our benchmark, the \textbf{Naive} combination, performs better than all experts in the upper part of the distribution and worse than \textbf{I-GARCHt} in the lower part. The \textbf{BMA} performs similarly. The other learning methods
\textbf{BOAG}, \textbf{EWAG} and \textbf{ML-PolyG} exhibit similar pattern, in the lower part of the distribution they perform similarly to best single expert \textbf{I-GARCHt}, and in the upper part similarly to \textbf{Naive}.

\begin{figure}[h]
      \centering
      \fbox{\includegraphics[width=0.975\columnwidth]{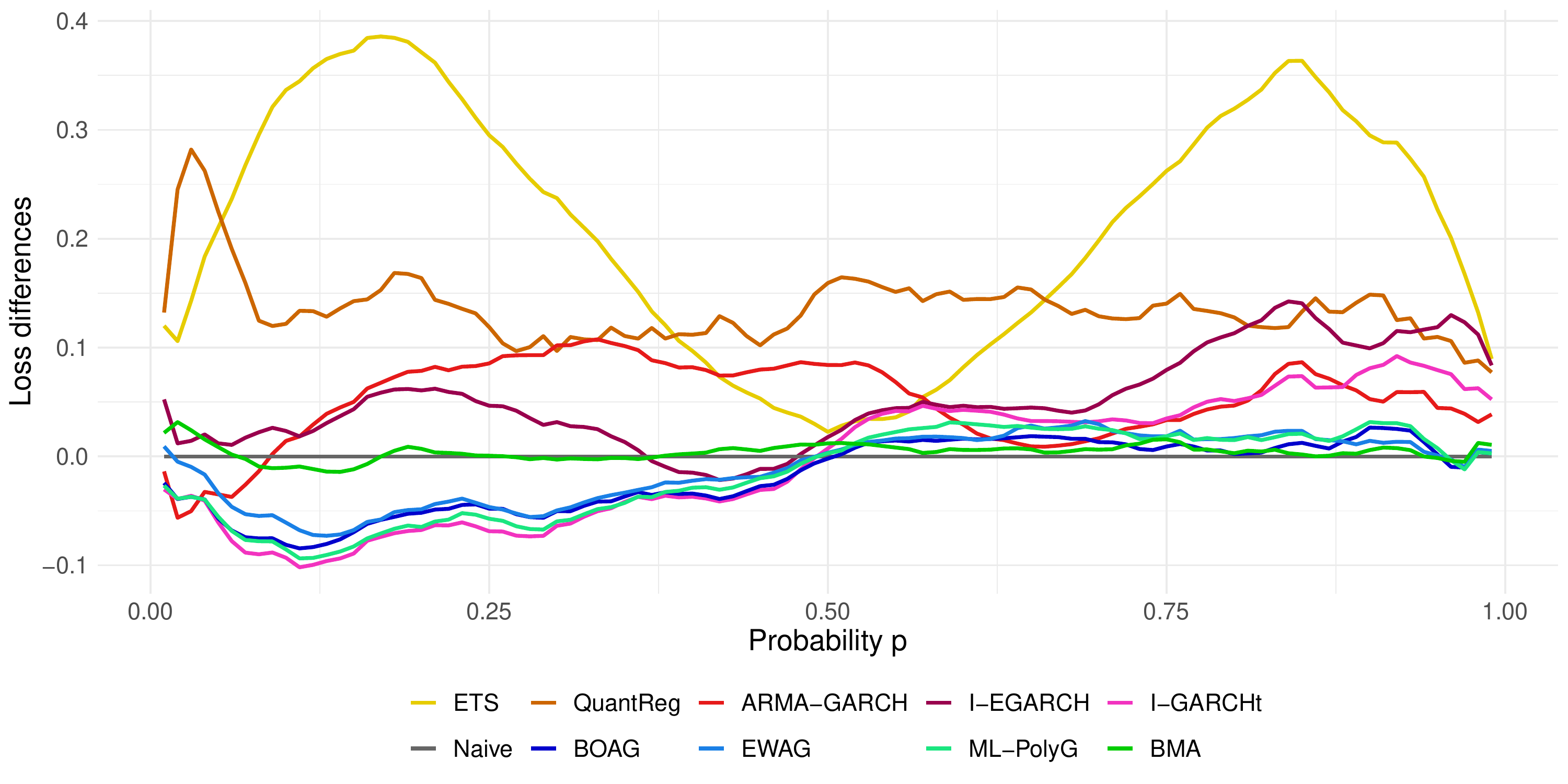}}
      \caption{Quantile loss difference to \textbf{Naive} (scaled by $10^4$) of single experts, and four combination methods with smoothing procedure \textbf{P-Smooth}. Smaller values correspond to better performance.}\label{fig:eua_ql}
\end{figure}

\begin{figure}[h]
      \centering
      \fbox{\includegraphics[width=0.975\columnwidth]{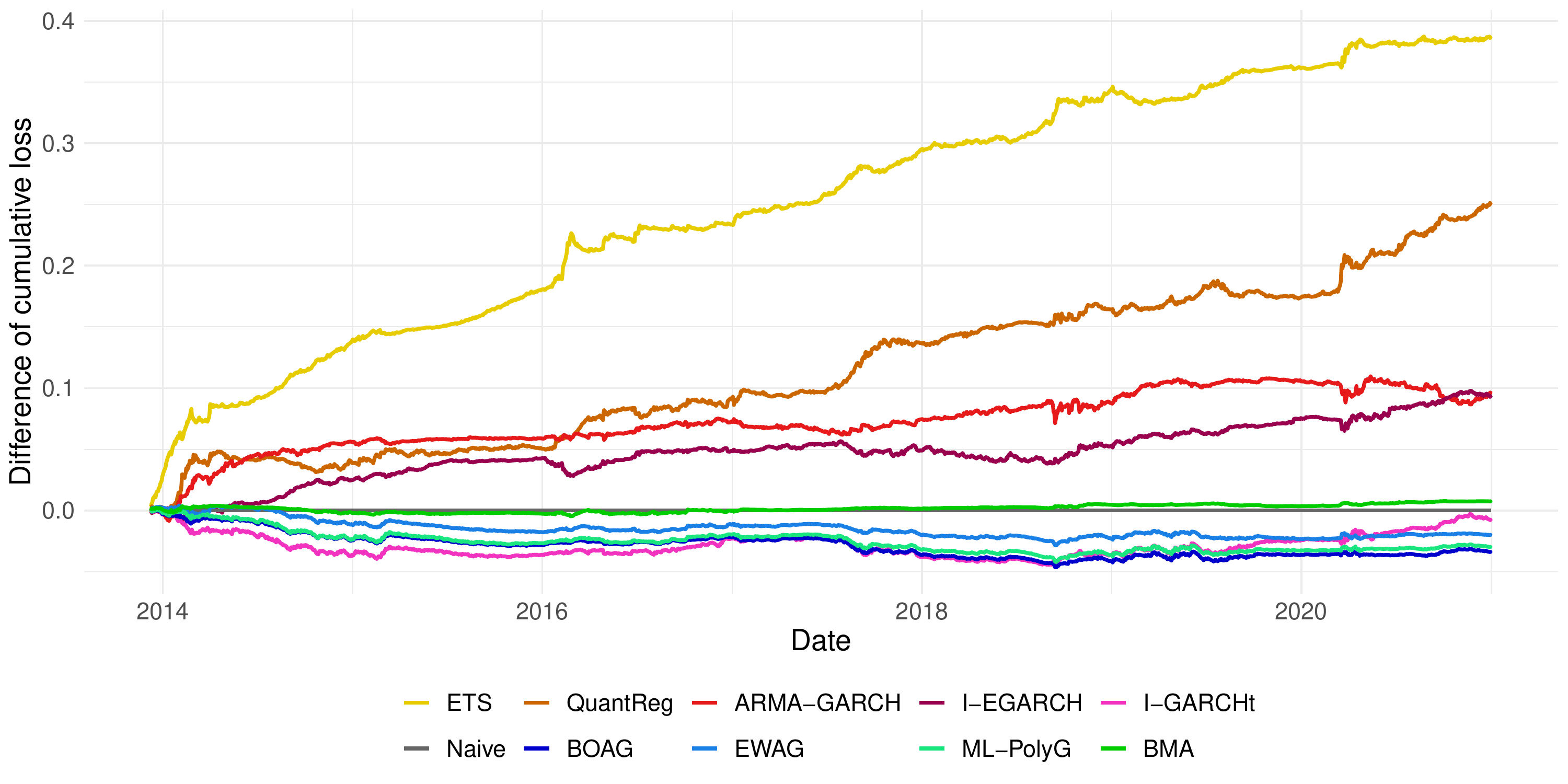}}
      \caption{Cumulative CRPS difference to \textbf{Naive} of single experts, and four combination methods with smoothing procedure \textbf{P-Smooth}.}
      \label{fig:eua_loss_time}
\end{figure}

From the cumulated CRPS differences in Figure~\ref{fig:eua_loss_time} we see that the overall results also hold over time. The likely most interesting observation is that in the first year, \textbf{I-GARCHt} performs better than the combination methods. Thereafter \textbf{BOAG}, \textbf{EWAG} and \textbf{ML-PolyG} catch up and eventually overtake the \textbf{I-GARCHt}. The reason may be that the learners require some time to stabilize. We do not consider  3- or 4-parametric distributions to limit the computation effort.

Figure~\eqref{fig:eua_weights_last} presents the weight functions in the last time step for the five specifications of the BOAG, which is the best online learning method. We see that for \textbf{Pointwise} the weights vary substantially over $\bsPP$. Also, it is noteworthy that the weights of both constant solutions differ to a moderate extent. Furthermore, we observe that \textbf{P-Smooth} leads to more flexible (i.e., locally smoothed) weights than the \textbf{B-Smooth}, which applies a global smoothing.
\begin{figure}[htbp]
      \centering
      \fbox{\includegraphics[width=0.975\columnwidth]{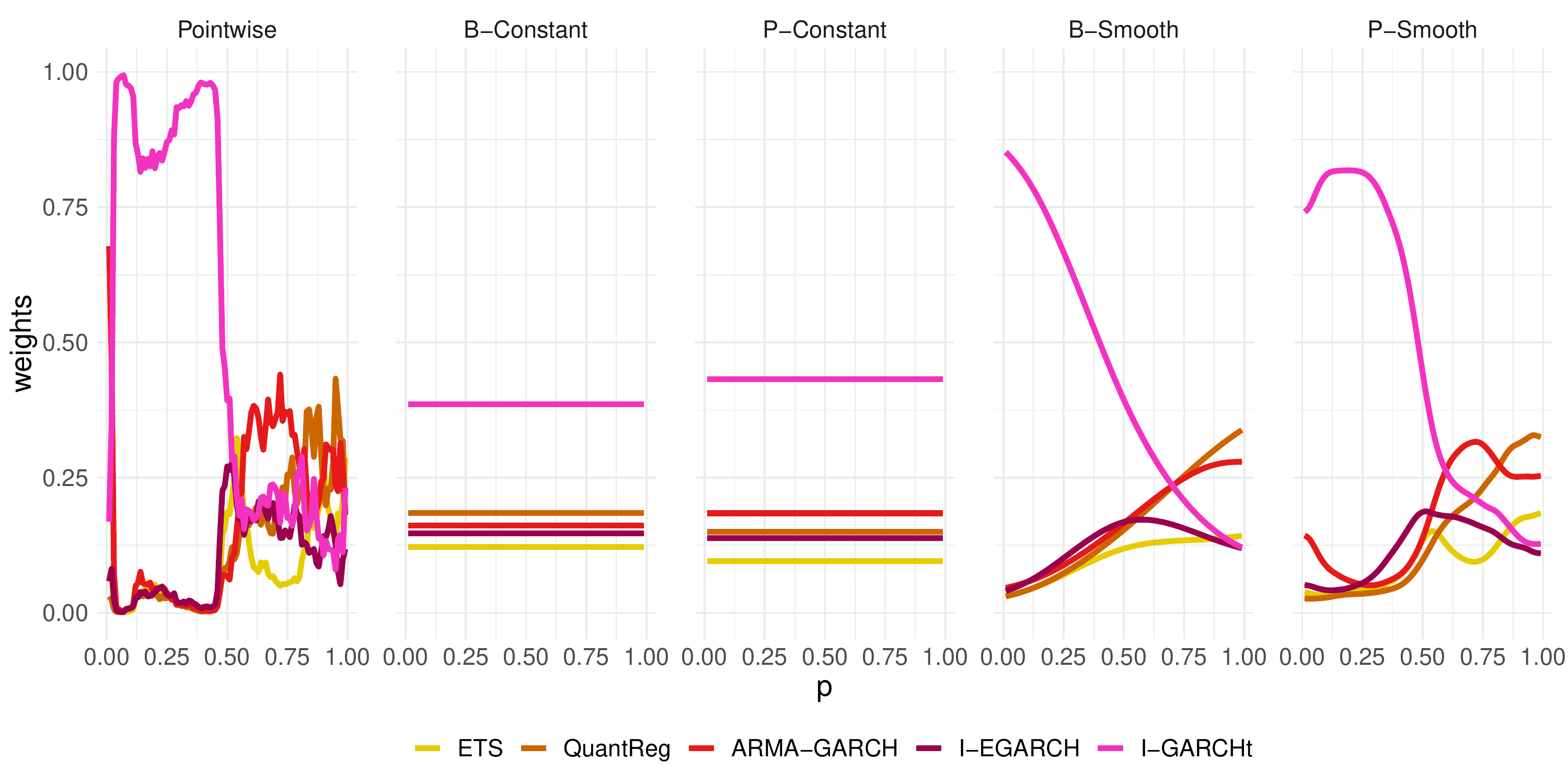}}
      \caption{Weight functions of \textbf{BOAG} in the last update step with respect to all discussed specifications.}\label{fig:eua_weights_last}
\end{figure}

Figure~\ref{fig:eua_weights} relates to the right tile of Figure~\eqref{fig:eua_weights_last} as it illustrates the evolution of the weight functions over time for \textbf{BOAG} with the {P-Smooth} specification. Indeed, we observe that the weights change rapidly in the initial period. Overall, we see that the performance pattern of~\eqref{fig:eua_ql} translates to the weight functions. The \textbf{I-GARCHt} dominated the weight in the lower part of the distribution. Only in the extreme tails, the \textbf{ARMA-GARCH} contributes all the time. For the first two years, the quantile regression \textbf{QuantReg} also contributes to the weight function in the lower part of the distribution. However, the weights in the upper part of the distribution are much closer to the \textbf{Naive} combination.

\begin{figure}[htbp]
      \centering
      \fbox{\includegraphics[width=0.975\columnwidth]{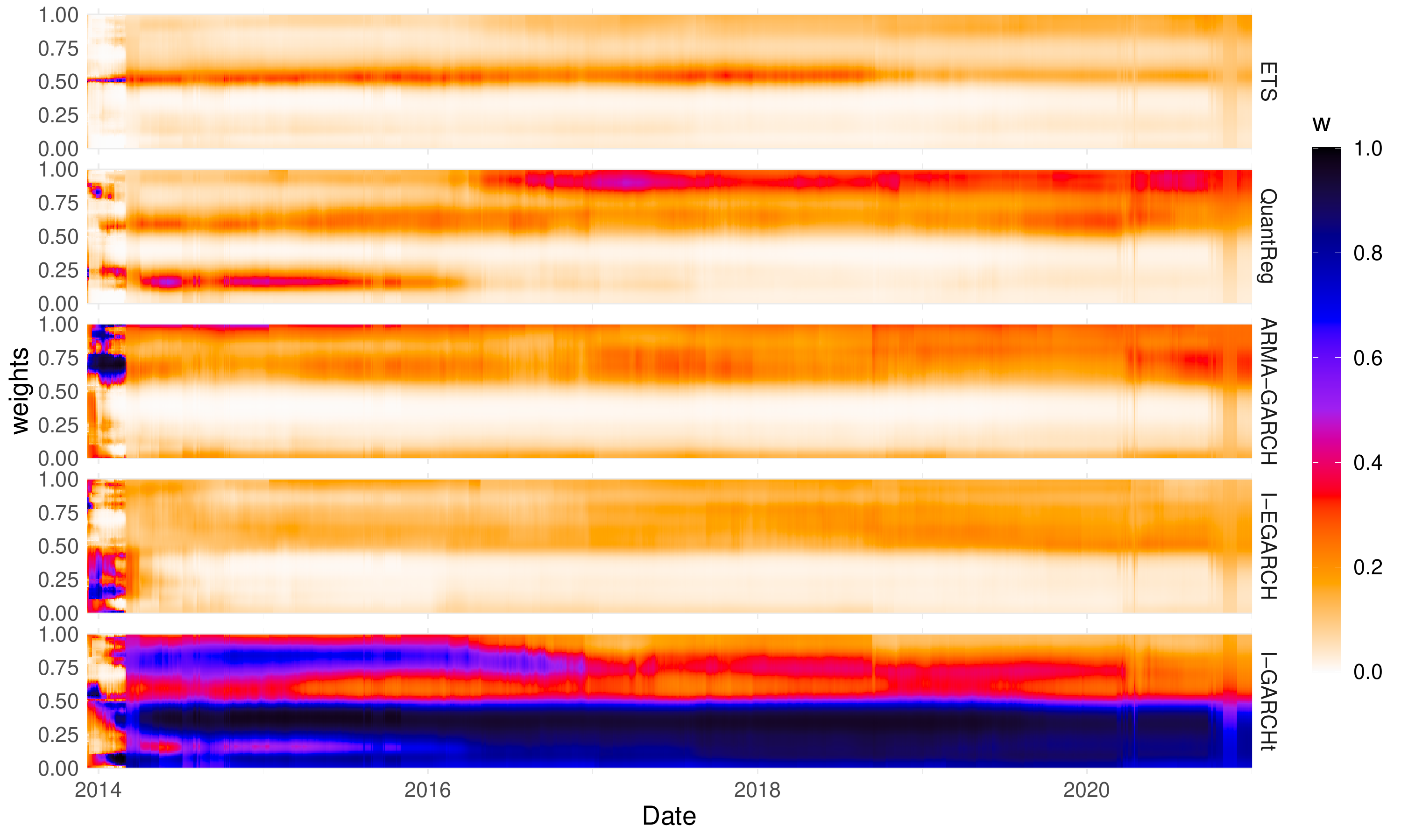}}
      \caption{Weight functions over time of \textbf{BOAG P-Smooth} for all experts.}\label{fig:eua_weights}
\end{figure}\label{appl}

\section{Summary and Conclusion}

This paper presented a framework for probabilistic forecast combination that aims to minimize the CRPS.

The key idea is to aggregate probabilistic expert predictions with combination weights that vary in time and parts of the distribution. The time-adaptivity is relevant to account for the changing performance of the experts. The variation of the weights in the distribution is important to consider that, e.g., some experts perform better in the center of the distribution while others perform better in the distributions' tails.
We prove that we can expect improvements in performance in those situations by considering pointwise optimization of the weight functions (see Proposition~\ref{proposition_optimality}).

We study the resulting CRPS learning approach for batch and online learning methods. The combination weight functions are modeled by a linear combination of basis functions, where we suggest B-Splines. We show that quantile regression can be used for CRPS learning but can be computationally costly. In contrast, we show that online aggregation is a suitable solution in practice. We prove that under mild regularization constraints, the fully adaptive BOA can yield almost\footnote{There is a small lack of optimality due to an $\log(\log(t))$ term which results from the adaptive learning rates.} optimal convergence rates with respect to the selection problem and the convex aggregation problem (see Theorem~\ref{thm_boag}). In simple words,
the proposed BOA for CRPS learning guarantees fast convergence $\OO(t)$ to the best expert and standard convergence $\OO(\sqrt{t})$ to the best convex combination of all experts. To achieve optimal convergence rates, we require some convexity assumptions on the quantile risk of the predictive target distribution. Proposition~\ref{proposition_qrisk} helps to characterize those convexity properties.

Furthermore, we discuss relationships to BMA and extensions to the proposed B-Spline-based CRPS learning method.
Most notably, we discuss P-Spline smoothing and propose a corresponding fully adaptive BOA procedure for CRPS-learning.
This online learning algorithm yields preferable probabilistic forecasting performance in simulation studies and an application to probabilistic forecasting of European emission allowance prices.

Multiple slightly different variations of the BOA exist in the literature~\citep{wintenberger2017optimal, bregere2020online}. The differences concern the cumulative regret, the weights, and the fact that $E$ is forced to be an integer exponentiation with base 2 in \citet{wintenberger2017optimal} while the integer condition is relaxed in \citet{bregere2020online}.
The \texttt{profoc} \texttt{R}-Package~\citet{profoc_package} implements relevant BOA versions, various related online learning procedures (e.g. EWA, ML-Poly) and quantile regression-based methods for CRPS-learning. Additionally, B-spline and P-spline smoothing can be applied to all algorithms.

Further research could go in various directions.
We restrict ourselves to equidistant quantile grids, as it is a popular reporting option for probabilistic forecasts. However, it only \textit{approximates} the CRPS. Precise CRPS learning methods for popular parametric distributions could be studied in more depth. It seems plausible that faster algorithms can be achieved, as we would not have to evaluate a dense grid of quantiles.

Moreover, we considered pointwise aggregation across quantiles (see~\eqref{eq_forecast_F_def}) for CRPS learning. Alternatively, we could study pointwise aggregation across probabilities by considering
\begin{equation}
    \widetilde{F}_{t}(x) = \sum_{k=1}^K w_{t,k}(x) \widehat{F}_{t,k}(x) .
    \label{eq_av_across_prop}
\end{equation}
This is more popular in the density combination literature~\citep{aastveit2014nowcasting, kapetanios2015generalised} but, to our knowledge, not studied for estimating optimal weight functions.
In this CRPS learning the weighting scheme~\eqref{eq_av_across_prop}
could utilize the standard definition of the CRPS~\eqref{eq_crps}.
However, the unbounded support of the weight functions in~\eqref{eq_av_across_prop} creates a more complicated CRPS learning problem. The main reason is that suitable consideration of basis functions should depend on the distribution of the prediction target, esp. appropriate range, location, and scale parameters. However, it might be worth investigating it further. Another interesting path of research may be the consideration of pointwise combination methods for distributional forecasts with respect to other scoring rules, e.g., the log-score.

Finally, further investigations could go into the direction of online aggregation methods that utilize the expert predictions and simultaneously the in-sample (i.e., training) performance of the experts (e.g., in BMA). Such a procedure is studied  in~\citet{adjakossa2020kalman} for the $\ell_2$-loss.
\label{concl}

\bibliography{library.bib}

\section*{Appendix}

\begin{proof}[Proof of Proposition~\ref{proposition_optimality}]\label{proof_proposition_optimality}
    We show~\eqref{eq_risk_ql_crps_convex}.
    Let $\widetilde{F}^{-1}_{t,\conv}(p) = \sum_{k=1}^K w_{t,k}(p) \widehat{F}^{-1}_{t}(p)
    $ the pointwise optimal solution with respect to $\QL_p$
    and $(\widetilde{F}^*)^{-1}_{t,\conv} =
        \sum_{k=1}^K w_{t,k}^* \widehat{F}^{-1}_{t} $
    the optimal solution with respect to $\text{CRPS}$
    for $t$.
    With Fubini and~\eqref{eq_crps_qs}
    it holds
    \begin{align*}
         & \widehat{\mathcal{R}}^{\text{CRPS}}_{t,\conv} - \overline{\widehat{\mathcal{R}}}^{\QL}_{t,\conv}                                                                                                    \\
         & =
        \sum_{i=1}^t \mathbb{E}[\text{CRPS}(\widetilde{F}^*_{i,\conv}, Y_i )|\mathcal{F}_{i-1} ] - 2\int_{0}^1 \sum_{i=1}^t \mathbb{E}[{\QL}_p(\widetilde{F}^{-1}_{i,\conv}(p), Y_i )|\mathcal{F}_{i-1} ] \,dp \\
         & =
        2 \int_{0}^1 \underbrace{\sum_{i=1}^t \mathbb{E}\left[ {\QL}_p((\widetilde{F}_{i,\conv}^*)^{-1}(p), Y_i ) |\mathcal{F}_{i-1} \right] - \sum_{i=1}^t 
            \mathbb{E}\left[
                {\QL}_p(\widetilde{F}^{-1}_{i,\conv}(p), Y_i ) |\mathcal{F}_{i-1} \right]}_{\geq 0 \text{ by optimality of } \widetilde{F}_{t,\conv} \text{ for all }p\in(0,1) } \,dp .
    \end{align*}
    The inequality equality holds strictly if and only if the optimality of $\wtilde{F}^{-1}_{i,\text{conv}}$ is strict on any interval $(p-\epsilon,p)$ due to the continuity of ${\QL}_p$ and the fact that ${\QL}_p(q,y)>0$ for $y-q\neq 0$. This holds, if and only if the weights $w_{t,k}$ are not constant for at least one $k$ in $p$ on the corresponding interval as $\wtilde{F}^{-1}$ is one-sided continuous.

    \eqref{eq_risk_ql_crps_expert} and~\eqref{eq_risk_ql_crps_lin} follow the same argument. In the former case, considering
    $\widetilde{F}^{-1}_{t,\min}(p) = \min_{k=1,\ldots,K} \widetilde{F}^{-1}_{t}(p)
    $ the pointwise individual best expert with respect to $\QL_p$
    and $(\widetilde{F}^*)^{-1}_{t,\min} =
        \min_{k=1,\ldots,K} \widetilde{F}^{-1}_{t} $
    the optimal solution with respect to $\text{CRPS}$.
\end{proof}

\begin{proof}[Proof of Proposition~\ref{proposition_qrisk}]\label{proof_proposition}
    We observe that $ \mathcal{Q}_p(x) =
        \int_{\R}  (\mathbb{1}\{0 < x-y\} -p)(x - y) f(y) d\nu(y)
        = \int_{\R}  \rho_p(x-y) f(y) d\nu(y) = \rho_p * f$
    with 
    $*$ as convolution operator. Then we receive by exchangability of subgradients in covolutions for the second subgradient that
    $(\rho_p * f)'' = (\rho_p' * f)' = \rho_p'' * f$. Further,
    we have that $\rho_p'(x)= (p-\boldsymbol{1}\{x\leq 0 \} ) $ and $\rho_p'' = \delta_0$ with $\delta_0$ as dirac function in $0$. Thus, the result follows by $\rho_p'' * f = \delta_0 * f = f$. 
    The latter statement of the proposition concerning follows as a twice-differentiable real valued function $g$ is $\gamma$-strongly convex if and only if
    $g''>\gamma >0$.
\end{proof}

\begin{proof}[Proof of Theorem~\ref{thm_boag}]\label{proof_theorem}
    As we want to apply Theorem 2.1 of \citet{gaillard2018efficient} we remark that a loss $\ell(\cdot, Y_t)$ is convex $G$-Lipschitz and weak exp-concave if
    \begin{itemize}
        \item[(A1)]
            for some $G>0$ it holds
            for all $x_1,x_2\in \R$ and $t>0$ that
            $$ | \ell(x_1, Y_t)-\ell(x_2, Y_t) | \leq G |x_1-x_2|$$
        \item[(A2)] for some $\alpha>0$, $\beta\in[0,1]$ it holds
            for all $x_1,x_2 \in \R$ and $t>0$ that
            \begin{align*}
                \mathbb{E}[
                    \ell(x_1, Y_t)-\ell(x_2, Y_t)
                | \mathcal{F}_{t-1}] \leq &
                \mathbb{E}[ \ell'(x_1, Y_t)(x_1 -  x_2)  |\mathcal{F}_{t-1}] \\
                                          & +
                \mathbb{E}\left[ \left. \left( \alpha(\ell'(x_1, Y_t)(x_1 -  x_2))^{2}\right)^{1/\beta}  \right|\mathcal{F}_{t-1}\right].
            \end{align*}
    \end{itemize}

    As $\QL_p$ is convex Theorem 4.3 of \citet{wintenberger2017optimal} implies~\eqref{eq_boa_opt_conv} for all $p$.
    Similarly, if $Y_t|\mathcal{F}_{t-1}$ is bounded and has a pdf $f$ Proposition 1 implies that quantile risk $\mathcal{Q}_p(x) = \mathbb{E}[ \QL_p(x, Y_t|\mathcal{F}_{t-1}) ]$ at $t$ is $\gamma$-strongly convex and satisfies (A2).
    Furthermore, from~\eqref{ql_gradient} we see that
    $\QL_p$ is convex Lipschitz with optimal Lipschitz constant $
        c_p=\max\{ p, 1-p\} $.
    Thus, by Theorem 2.1 of \citet{gaillard2018efficient} we receive~\eqref{eq_boa_opt_select} with $\beta=1$.
    Now, define $\bsPP_j = \{ a 2^{-j} | a=1,\ldots, 2^{a}-1\}$,
    the CRPS-approximation $\text{CRPS}_j = \frac{1}{2^{j}-1}\sum_{p\in \bsPP_j} \QL_{p}$ and
    note that $\text{CRPS} = \lim_{j\to\infty} \text{CRPS}_j$.
    $\text{CRPS}_j$ and thus $\text{CRPS}$ satisfies (A2) with $\beta=1$ as $\gamma$ does not depend on $p$. Further, with some calculus we observe that $\text{CRPS}_j$ has the optimal Lipschitz constant
    $C_j = \sum_{l=1}^{2^j-1} \max\{l,1-l\} /(2^j-1) =
        (3(2^{j-1}-1)  + 1)/(2 (2^j-1)) $.
    Hence, for $C_j$ it holds $\lim_{j\to\infty }(C_j) = 3/4$.
    Thus, $\text{CRPS}$ is convex Lipschitz
    and satiesfies (A1) and the Theorem is implied by Theorem 2.1 of \citet{gaillard2018efficient}.
\end{proof}

\end{document}